\newcommand{\cc}[2]{{#2}} 
\newcommand{\cR}[1]{\cc{R}{#1}} 
\newcommand{\cA}[1]{\cc{A}{#1}} 
\newcommand{\cW}[1]{\cc{W}{#1}}
\newcommand{\scst}{\scriptscriptstyle}
\newcommand{\Z}{\mathbb{Z}} % set of integer numbers
\newcommand{\N}{\mathbb{N}} % set of natural numbers
\newcommand{\R}{\mathbb{R}} % set of real numbers
\newtheorem{definition}{Definition}[section]
\newtheorem{lemma}{Lemma}[section]
\newtheorem{prop}[lemma]{Proposition}
\newtheorem{corollary}[lemma]{Corollary}
\newtheorem{remark}[lemma]{Remark}
\journal{Computer Vision and Image Understanding}
\begin{document}

\begin{frontmatter}

%%%%%%%%%%%%%%%%%%%%%%%%%%%%%%%%%%%%%%%%%%%%%%%%%%%%%%%%%%%%%%%%%%%%%%
\title{Invariant Representative Cocycles of Cohomology Generators using Irregular Graph Pyramids}
\author{Rocio Gonzalez-Diaz$^{a}$, Adrian Ion$^{b,d}$,\\ Mabel Iglesias-Ham$^{b,c}$, and  Walter G. Kropatsch$^{b}$}
%%%
\address{
$^{a}$ Applied Math Department, School of Computer Engineering\\
 University of Seville,  Reina Mercedes Avenue,\\ 
 CP: 41012 Seville, Spain\\
rogodi@us.es\\

$^{b}$ Pattern Recognition and Image Processing Group, Vienna
University of Technology,\\
Faculty of Informatics, Institute of Computer Aided Automation,\\
Favoritenstr. 9/1832, A-1040 Vienna, Austria.\\
\{ion,mabel,krw\}@prip.tuwien.ac.at\\

$^{c}$ Pattern Recognition Department, Advanced Technologies
Application Center,\\
7th Avenue \#21812 \%218 and 222, Siboney Neighborhood, \\
Playa, C.P. 12200, Havana City, Cuba. \\
miglesias@cenatav.co.cu \\

$^{d}$ Institute for Numerical Simulation\\
Faculty of Mathematics and Natural Sciences, University of Bonn,\\
Wegelerstr. 6, 53115 Bonn, Germany.\\
%\{ion\}@ins.uni-bonn.de\\
%
}

%%%%%%%%%%%%%%%%%%%%%%%%%%%%
%%%%%%%%%%%%%%%%%%%%%%%%%%%%
\begin{abstract}
Structural pattern recognition describes and classifies data based on the relationships of features and parts. Topological invariants, like the Euler number, characterize the structure of objects of any dimension. Cohomology can provide more refined algebraic invariants to a topological space than does homology. It assigns `quantities' to the chains used in homology to characterize holes of any dimension. Graph pyramids can be used to describe subdivisions of the same object at multiple levels of detail. This paper presents cohomology in the context of structural pattern recognition and introduces an algorithm to efficiently compute representative cocycles (the basic elements of cohomology) in 2D using a graph pyramid. 
An extension to obtain scanning and rotation invariant cocycles is given.
\end{abstract}

\begin{keyword}graph pyramids, representative cocycles of cohomology generators\end{keyword}
\end{frontmatter}
%%%%%%%%%%%%%%%%%%%%%%%%%%%%

\section{Introduction}

Image analysis deals with digital images as input to pattern recognition systems \cA{with the purpose to extract information about their content, usually objects. Objects appear in images affected by transformations (e.g. rotation, zoom, projection) and noise.}
Topological features have the ability to ignore changes in \cA{the geometry of objects}  \cA{by extracting object properties invariant to elastic transformations}. Simple \cA{topological} features are for example the number of connected components, the number of holes, etc., while more refined ones, like homology and cohomology, characterize holes and their relations.

An example application of topological features is topology simplification,  an active field in geometric modeling and medical imaging where high-resolution surfaces are created through iso-surface extraction from volumetric representations, obtained by 3D photography, CT, or MRI. Iso-surfaces often contain many topological errors in the form of tiny handles. These nearly invisible artifacts hinder subsequent operations like mesh simplification, re-meshing, and parametrization. See, for example~\cite{WHDS04}. \cA{Another application is shape description and matching, where persistence and homology of a function defined on a shape have been successfully applied to extract shape features~\cite{Allili2007}.}

\cW{A 2D image is the result of projecting a 3D scene into the image plane.
Often the precise camera parameters are not known and still humans have
no problem in correctly interpreting the displayed objects in the image.
A 3D object is surrounded by a reflecting surface which itself may split into
several smaller but connected patches which can be characterized by their
color, their texture or other visual properties. The visible part
of the object's surface maps into a region of the image which shows
the same adjacencies of patches as the original surface
(because {\it the camera sees the same side of the surface})
although the geometry of the patches may change due to projection,
due to camera or object movement or due to deformations of the object.
Sometimes a collection of patches is completely surrounded
by some other surface patches,
e.g. a fancy soccer ball on a Spanish T-shirt \cR{(see Fig. \ref{balonencamiseta}). }
Although both the picture of the ball as well
as the T-shirt may have a specific patch structure 
\cW{(stripes on shoulder and arms, logo)}
it is clear which subset of regions forms the ball and which regions
belong to the remaining T-shirt. A simplified version could describe
the pixels of the T-shirt 1's and the ball's pixels by 0 \cR{(see Fig. \ref{balonencamiseta}). } Then
the ball is a {\it hole} in the T-shirt as long as the surrounding
of the ball is visible and not occluded by other objects.
It may be highly difficult to uniquely and reliably identify any
of the involved small patches individually under difficult geometric
deformations while the overall arrangement of patches forming the ball
is mostly invariant to these geometric deformations. How to segment patches
into meaningful aggregations has been dealt with in many other segmentation
methods and it is not the main emphasis of this paper. We therefore restrict
ourselves in the following on binary images with the understanding
that each region may be the collection of several subregions belonging
together. }

\begin{figure}[t!]
\begin{center}\includegraphics[width=8cm]{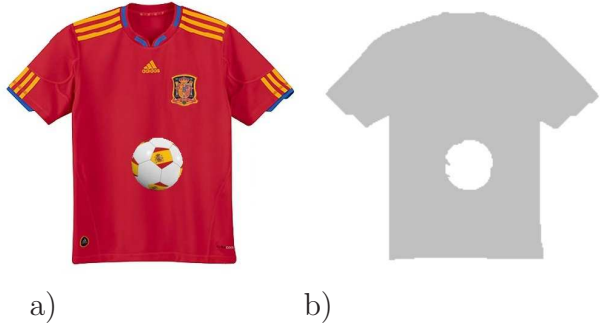}
$\mbox{ }$\\
a) \hspace{3cm} b) \hspace{3cm} $\mbox{ }$ \\
\end{center}
\caption{a) Original image: ball on T-shirt of world champions 2010 in soccer; b) a segmentation of the original image using pyramids.}
\label{balonencamiseta}
\end{figure}

Considering 2D binary images, an \begin{it}object\end{it} is defined by a connected set (4-connectivity) of foreground pixels. 
A \textit{region adjacency graph} (RAG) encodes the adjacency of regions in a partition. 
The holes in a RAG  associated to an object of a 2D binary digital image can be characterized by
establishing an equivalence between all the cycles  as follows:
two cycles are equivalent if one can be obtained from the other by joining to it one or more \textit{degenerate cycles}
(cycles with exactly $4$ edges). For example, there is only one equivalence class for the foreground (gray pixels) of the digital image in
Fig.~\ref{fig:example1}, which represents the unique white hole.
This is similar to considering the digital image as a cell complex\footnote{Intuitively a cell complex is defined by a set of $0$-cells (vertices) that bound a set of $1$-cells (edges), that bound a set of $2$-cells (faces), etc.} \cite{Hatcher02a} (see Fig.~\ref{fig:example1}.c).
\cW{Unfortunately digital images are not `clean', noise can create many
unwanted holes which complicate the correct interpretation.}
One can ask for the edges we have to delete in order to `destroy' a hole.
In the example in Fig.~\ref{fig:example1}, it is not enough to delete only one edge.
The deletion of the bold edges in Fig.~\ref{fig:example1}.c  
 together with the faces that they bound produces the `disappearing' of the hole.
\cA{The set of bold edges in Fig.~\ref{fig:example1}.c define a $1$-cocycle, a topological invariant of the respective object. Equivalence classes of such cocycles are the elements of \textit{cohomology}.}

\cW{To cope with complexity issues that arise when highly complex algorithms
must be applied to huge amounts of data, graph pyramids are used.
These hierarchical data structures offer possibilities to reduce the
amount of data by local operations which can be applied in parallel and
which have the enormous advantage to preserve the topological properties
of the data. Hence the search for independent cocycles can be correctly 
done on a fraction of the data at the top of the pyramid. The simplified
geometry of these cocycles can then  \cR{be delineated} top-down through
the levels of the pyramid by again local processes up to the high
accuracy of the base level. }

\cA{Maybe due to its more abstract nature, lacking a geometric meaning and due to its computation complexity,} cohomology has not been yet widely applied to pattern recognition and image processing. \cA{This paper is possibly the first attempt} to use it in the context of digital images. For this \cA{purpose, in this paper we consider} the best known environment which are 2D images whereas nD is the \cA{ultimate} goal.
Concepts related to cohomology can have associated interpretations in graph theory. Having these interpretations opens the door for applying classical \cA{efficient} graph theory algorithms to compute and manipulate these features.
Besides, for objects embedded in $\mathbb{R}^3$, 
homology\cA{-- a wider used topological invariant,} and cohomology groups are isomorphic. But
the ring structure presented in cohomology characterizes the relations between $2$-holes (cohomology generators of \cA{dimension} $1$), which homology does not.
Indeed, dealing with homology and cohomology properties, representative cycles and cocycles, and their computation is quite different and doing this study in 2D gives important insights which should be relevant for extension to \cA{$n$D}, $n>2$.
Initial results regarding this work have been presented in~\cite{GonzalezDiaz09a}. The current paper extends \cA{our earlier}  \cA{publication} with detailed insights and proofs, and a refinement of the previous method that makes the obtained cocycles scanning and rotation invariant \cA{in the case of an identical discretization}.

The paper is organized as follows: Sections~\ref{sec:pyr} and~\ref{sec:cohom} recall graph pyramids and cohomology, and make initial connections. In Section~\ref{sec:preserving-topology}, preserving-topology properties in irregular graph pyramids are given. Section~\ref{sec:computation} presents the proposed method. 
Section~\ref{sec:stable-cocycles} uses \cA{the} properties of the \cA{proposed} method to extend it and obtain scanning and rotation invariant cocycles. Section~\ref{sec:conc} concludes the paper.

\begin{figure}[t!]
\begin{center}\includegraphics[width=8cm]{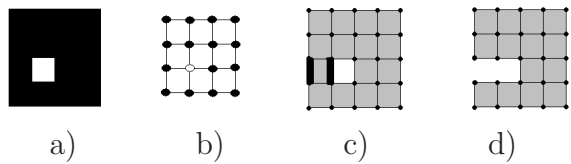}
$\mbox{ }$\\
\hspace{1.3cm} a) \hspace{1.3cm} b) \hspace{1.3cm} c) \hspace{1.3cm} d) \hspace{1.3cm}$\mbox{ }$ \\
\end{center}
\caption{a) A 2D digital image $I$; b) its RAG; c) a cell complex associated to $I$ (with bold edges, a representative cocycle); and d) the cell complex without the hole.}
\label{fig:example1}
\end{figure}

%%%%%%%%%%%%%%%%%%%%%%%%%%%%%%%%%%%%%%%%%%%%%%%%%%%%%%%%%%%%%%%%%%%%%%%%%%%%%%%%%%%%%%%%%%%%%%%5
\section{Irregular Graph Pyramids}
\label{sec:pyr}

\cA{A graph (see for example~\cite{Diestel97a}) is an ordered pair $G = (V,E)$ comprising a set $V$ of vertices and a set $E$ of edges. Each edge $e \in E$ is incident to two not necessarily distinct vertices $v, w \in V$, written as $e=(v, w)$. An edge $e=(v, w)$ is said to be directed if the pair $(v, w)$ is an ordered pair. An edge is said to be a self-loop if $v = w$. The edge $e=(v, w)$ is called a parallel edge iff $\exists e' \in E, e' \neq e$ s.t. $e' = (v, w)$. A graph is called undirected if none if its edges is directed, and it is called planar if it can be drawn in a plane with no edges crossing (vertices are drawn as points and edges as lines connecting their incident vertices). Given a graph $G = (V, E)$ removing an edge $e \in E$ will result in the graph $G' = (V, E \setminus \{e\})$, contracting the edge $e=(v,w)$ implies removing it and identifying its incident vertices s.t any remaining edge previously incident to $v$ or $w$ is now incident to the unique vertex $v=w$.}

\cA{Given a decomposition of an object or image into regions a} region adjacency graph (RAG) \cA{is an undirected graph that} encodes the adjacency of regions in a partition. A vertex is associated to each region, vertices of neighboring regions are connected by an undirected edge. Classical RAGs do not contain any self-loops or parallel edges. An \textit{extended region adjacency graph} (eRAG) is a RAG that contains the so-called \textit{pseudo edges}, which are self-loops and parallel edges used to encode neighborhood relations to a cell completely enclosed by one or more other cells~\cite{K95}. The \textit{dual} graph of an eRAG $G$ is called a \textit{boundary graph} (BG) and is denoted by $\bar{G}$ ($G$ is said to be the \textit{primal} graph of $\bar{G}$). The edges of $\bar{G}$ represent the boundaries (borders) of the regions encoded by $G$, and the vertices of $\bar{G}$ represent points where boundary segments meet. $G$ and $\bar{G}$ are planar graphs. There is a one-to-one correspondence between the edges of $G$ and the edges of $\bar{G}$, which induces a one-to-one correspondence between the vertices of $G$ and the 2D cells (will be denoted by \textit{faces}\footnote{Not to be confused with the vertices of the dual of a RAG (sometimes also denoted by the term \textit{faces}).}) of $\bar{G}$. The dual of $\bar{G}$ is again $G$. The following operations are equivalent: edge contraction in $G$ with edge removal in $\bar{G}$, and edge removal in $G$ with edge contraction in $\bar{G}$. 

A (dual) irregular graph pyramid\cR{~\cite{Kropatsch93,Willersinn94,K95,KHPL05}} is a stack of successively reduced planar graphs $P = \{(G_0,\bar{G}_0),\dots,(G_n,\bar{G}_n)\}$. Each level $(G_k,\bar{G}_k)$, $0 < k \leq n$, is obtained by first contracting edges in $G_{k-1}$ (removal in $\bar{G}_{k-1}$), if their end vertices have the same label (regions should be merged), and then removing edges in $G_{k-1}$ (contraction in $\bar{G}_{k-1}$) to simplify the structure. The contracted and removed edges are said to be \textit{contracted} or \textit{removed} (sometimes called \textit{removal} edges) in $(G_{k-1},\bar{G}_{k-1})$. In each $G_{k-1}$ and $\bar{G}_{k-1}$, contracted edges form trees called \textit{contraction kernels}. One vertex of each contraction kernel is called a \textit{surviving vertex} and is considered to have  `survived' to $(G_k,\bar{G}_k)$. The vertices of a contraction kernel in level $k-1$ form the \textit{reduction window} of the respective surviving vertex $v$ in level $k$. The \textit{receptive field} of $v$ is the (connected) set of vertices from level $0$ that have been `merged' to $v$ over levels $0,\dots, k$. The \textit{equivalent contraction kernel (ECK)} of a vertex $v$ is the tree obtained by replacing $v$ and all is descendants with their corresponding \cA{contraction kernels}. The vertices of the ECK of $v$ form the receptive field of $v$.

\begin{figure}[t]
\centerline{\includegraphics[width=10cm]{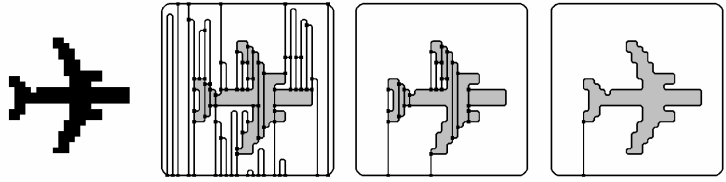}}
\vspace{-2mm}
\caption{A digital image $I$, and boundary graphs $\bar{G}_6$, $\bar{G}_{10}$ and $\bar{G}_{15}$ of the pyramid of $I$.}
\label{img:pypramid}
\end{figure}

Algorithm~\ref{alg:buildpyr} gives the main steps used to build a graph pyramid. In Line~\ref{lin:contract-and-simplify} the operations are performed on both graphs (adjacency and boundary) even if for simplicity only one of the graphs is mentioned in each step. Line~\ref{lin:contract-and-simplify}.ii removes self\cR{-}loops bounding an empty face in the adjacency graph. Line~\ref{lin:contract-and-simplify}.iii simplifies the region boundaries in the boundary graph.

\begin{algorithm}[tbph]
\caption{Build (dual) graph pyramid}\label{alg:buildpyr}
\label{alg:pyramid}
\textbf{Input}: image $I$ /*\textit{pixels labeled `object' or `background'}*/
\begin{algorithmic}[1]
\STATE $(G_0,\bar{G}_0) = ((V_0, E_0), (\bar{V}_0, \bar{E}_0))$ \\ \COMMENT{$V_0$ associates a vertex to every pixel. $E_0$ connects vertices corresponding to 4-connected pixels. $G_0$ and $\bar{G}_0$ are dual.}
\STATE $k = 0$
\REPEAT
\STATE \COMMENT {select edges to contract} \label{lin:boru}\\
$T = \emptyset$ \\
\textbf{for all }$ v \in G_k$ \textbf{do}\\
\quad i. select an edge $(v,w) \in G_k$ with $v,w$ having the same label \\
\quad ii. $T \leftarrow T \cup (v,w)$ \COMMENT{add edge}
\STATE \textbf{if} $T \neq \emptyset$ \textbf{then} \label{lin:contract-and-simplify}\\
\quad\COMMENT{region merging, easier described in the adjacency graph:}\\
\quad i. $(G', \bar{G}') \leftarrow$ contract edges $T$ of $G_k$ (removal in $\bar{G}_k$)\\
\quad\COMMENT{simplification, easier described in the boundary graph:}\\
\quad ii.  \cR{$(G'', \bar{G}'') \leftarrow$} contract pending trees in $\bar{G}'$ (removal in $G'$)\\
\quad iii. $(G_{k+1}, \bar{G}_{k+1}) \leftarrow$ \parbox[t]{8.5cm}{contract one distinct edge incident to each vertex of degree 2 in  \cR{$\bar{G}''$ (removal in $G''$)}}
\STATE $k \leftarrow k+1$

\UNTIL $T = \emptyset$
\end{algorithmic}
\textbf{Output}: Graph pyramid $P = \{(G_0,\bar{G}_0),\dots,(G_{k-1},\bar{G}_{k-1})\}$.
\end{algorithm}

%%%%%%%%%%%%%%%%%%%%%%%%%%%%%%%%%%%%%%%%%%%%%%%%%%%%%%%%%%%%%%%%%%%%%%%%%%%%%%%%%%%%%%%%%%%%%%%%%555
\section{Homology, Cohomology and Integral Operators}
\label{sec:cohom}

\cR{We refer to~\cite{Munkres} for an introduction
to homology and cohomology.}

Intuitively, homology characterizes the holes of any dimension (i.e. connected components, 1-dimensional holes, etc.) of an $n$-dimensional object. It defines the concept of \textit{generators} which, for example for 2D objects are similar to closed paths of edges surrounding holes. More general, $k$-dimensional manifolds surrounding $(k+1)$-dimensional holes are generators~\cite{Munkres}, and define equivalence classes of $(k+1)$-holes. Cohomology arises from the algebraic dualization of the construction of homology. It manipulates groups of homomorphisms to define equivalence classes. Intuitively, cocycles (the invariants computed by cohomology), represent the sets of elements (e.g. edges) to be removed to destroy certain holes. See
Fig.~\ref{fig:example1}.c for an example cocycle.\\

\begin{figure}[tb]
\centerline{
\begin{tabular}{cc}
\includegraphics[width=3.3cm]{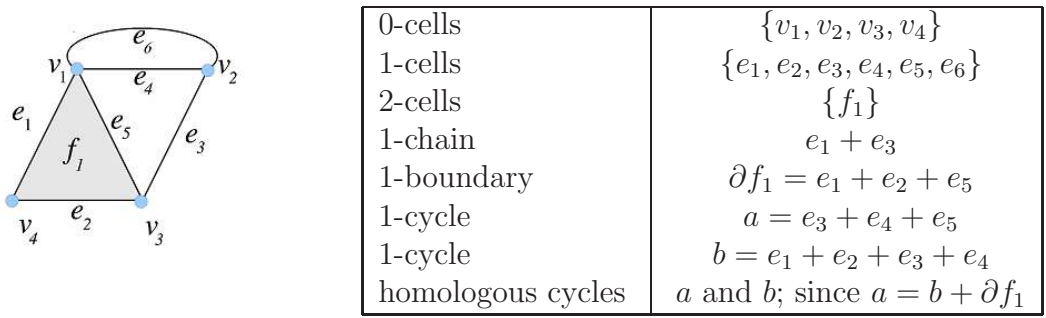}
& \hspace{1cm}
\begin{tabular}{|l|c|}\vspace{-3cm}\\
\hline
$0\mbox{-cells}\;$ & $\;\{v_1,v_2,v_3,v_4\}$\\
$1\mbox{-cells}\;$ & $\;\{e_1,e_2,e_3,e_4,e_5,e_6\}$\\
$2\mbox{-cells}\;$ & $\; \{f_1\}$\\
$1\mbox{-chain}\;$ & $\; e_1+e_3$\\
$1\mbox{-boundary}\;$ & $\; \partial f_1 =e_1+e_2+e_5$\\
$1\mbox{-cycle}\;$ & $\; a= e_3+e_4+e_5$\\
$1\mbox{-cycle}\;$ & $\; b=e_1+e_2+e_3+e_4$\\
$\mbox{homologous cycles}\;$ & $\; a$ and $b$; since $a=b+\partial f_1$\\\hline
\end{tabular}
\end{tabular}
}
\caption{Example of cell, chain, boundary and cycle.}
\label{img:exSimplComplex}
\end{figure}

A {\em homeomorphism} is a bijective continuous function between two spaces, that has a continuous inverse function.
 They are the mappings which preserve all the topological properties  of a given space. Two spaces with a homeomorphism between them are called {\em homeomorphic}, and from a topological viewpoint they are the same.

Two continuous functions from one topological space to another are called {\em homotopic} if one can be `continuously deformed' into the other.
Two spaces $X$ and $Y$ are {\em homotopy equivalent} if there are maps $f:X\to Y$ and $g:Y\to X$ such that $gf$ is homotopic to $id_{\scst X}$ and 
$fg$ is homotopic to $id_{\scst Y}$. Observe that if two spaces are homeomorphic then they are homotopic.
A {\em homotopy invariant} is a topological property which is invariant under homotopy.

A $p$-dimensional {\em cell} (or $p$-cell, for short) is a topological space that is homeomorphic to the $p$-dimensional ball \cR{$B^p$}. 
A $0$-cell is homeomorphic to a point, a $1$-cell to an arc and a $2$-cell  to a disk.
Roughly speaking, a {\em cell- (or CW-) complex} is built by gluing together the basic building blocks called cells.

Figures~\ref{img:exSimplComplex} and \ref{img:excohom} illustrate the following abstract concepts.

\subsection{Homology}

The notion of $p$-\textit{chain} is defined as a formal sum
of $p$-cells.
The chains are considered over $\mathbb{Z}/2$ coefficients i.e. a $p$-cell is either present in a $p$-chain (coefficient 1) or absent (coefficient 0) - any cell that appears twice vanishes. 
The set of $p$-chains form an abelian  group called the \textit{$p$-chain group} $C_p$.
This group is generated by all the $p$-cells.
The \textit{boundary operator} is a set of homomorphisms $\{\partial_p:C_p\to C_{p-1}\}_{p \geq 0}$ connecting two \cR{consecutive} dimensions.
By linearity, the boundary of any
$p$-chain is  defined as the formal sum of the boundaries of each $p$-cell
that appears in the chain. The boundary of $0$-cells (i.e. points) is always $0$.
A {\em chain complex} is the set of all the  chain groups connected by the boundary operator: $\cdots\stackrel{\partial_{p+1}}{\to} C_{p}\stackrel{\partial_p}{\to} C_{p-1}{\to} \cdots\stackrel{\partial_1}{\to} C_0
\stackrel{\partial_0}{\to} 0$.

\begin{figure}[tb]
\centerline{
\begin{tabular}{cc}
\includegraphics[width=3.3cm]{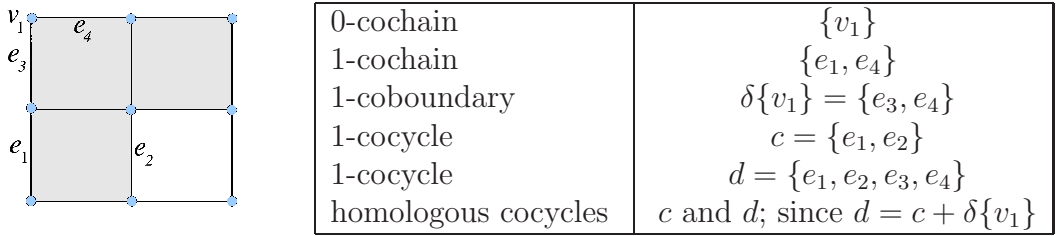}
& \hspace{0.3cm}
\begin{tabular}{|l|c|}\vspace{-2.8cm}\\
\hline
$0\mbox{-cochain}\;$ & $\;  \{v_1\}$\\
$1\mbox{-cochain}\;$ & $\;  \{e_1,e_4\}$\\
$1\mbox{-coboundary}\;$ & $\; \delta \{v_1\} =\{e_3,e_4\}$\\
$1\mbox{-cocycle}\;$ & $\; c= \{e_1,e_2\}$\\
$1\mbox{-cocycle}\;$ & $\; d=\{e_1,e_2,e_3,e_4\}$\\
$\mbox{homologous cocycles}\;$ & $\; c$ and $d$; since $d=c+\delta \{v_1\}$\\\hline
\end{tabular}
\end{tabular}
}
\caption{Example of cochain, cocycle and coboundary.}
\label{img:excohom}
\end{figure}

A $p$-chain $\sigma$ is called a
$p$-\textit{cycle} if $\partial_p  (\sigma)=0$.
If $\sigma=  \partial_{p+1} (\mu)$ for some $(p+1)$-chain $\mu$ then $\sigma$
 is called a $p$-\textit{boundary}. Two $p$-cycles $a$ and $a'$ are \textit{homologous} if there exists a $p$-boundary $b$ such that $a=a'+b$. 
Denote the groups of $p$-cycles
and $p$-boundaries by $Z_p$ and $B_p$ respectively.
\cR{For each $p$, $\partial_{p-1}\partial_{p}=0$. In other words,} all $p$-boundaries are $p$-cycles ($B_p\subseteq Z_{p}$).
Define the  $p^{th}$  \textit{homology group} to be the quotient group $H_p=Z_p/B_p$, for all $p$. 
Each element of $H_p$ is a class obtained by adding each $p$-boundary to a given $p$-cycle $a$. 
Then $a$ is a \textit{representative $p$-cycle} of the homology class $a+B_p$.

\cR{Since the chains are considered over $\mathbb{Z}/2$ coefficients, the chain groups are vector spaces and the boundary operators are linear operators. The cycle and and boundary groups are just the kernel and image of such operator. The homology group is a quotien space.} 

\subsection{Cohomology}

\textit{Cohomology groups} are constructed by turning chain groups into groups of homomorphisms and
boundary operators into their dual homomorphisms.
Define a $p$-\textit{cochain} as a homomorphism $c : C_p \to \mathbb{Z}/2$.
We can see a $p$-cochain as a binary mask of the set of $p$-cells: imagine you order all $p$-cells in the complex. (let's say we have $n$ $p$-cells, and call this ordered set $S_p$). Then a $p$-cochain $c$ is a binary mask of $n$ values in $\{0,1\}^n$, where $n$ is the number of $p$-cells in the complex.
When no confusion can arise, we will identify the $p$-cochain $c$ with the set $S$ of $p$-cells that are evaluated  to $1$ by $c$.

The $p$-cochains form the set $C^p$ which is a group.
The boundary operator defines a dual set of homomorphisms, the \textit{coboundary operator}
$\{\delta^p: C^p\to C^{p+1}\}_{p\geq 0}$, such that $\delta^p(c)=c\partial_{p+1}$ for any  $p$-cochain $c$.
Since the coboundary operator runs in a direction
opposite to the boundary operator, it raises the dimension. Its kernel is the
group of \textit{cocycles} and its image is the group of \textit{coboundaries}.
A $p$-cochain $c$ is a $p$-cocycle if $\delta^p c(=c\partial_{p+1}):C_{p+1}\to \mathbb{Z}/2$ is the null homomorphism. A $p$-cochain $d$ is a $p$-coboundary 
if there exists a $(p-1)$-cochain $e$ such that $d=\delta^{p-1}e(=e\partial_p)$.
Two $p$-cocycles $c$ and $c'$ are \textit{cohomologous} if there exists a $p$-coboundary $d$ such that
 $c=c'+d$. The $p^{th}$  \textit{cohomology group} is defined as the quotient of $p$-cocycle modulo
$p$-coboundary groups, $H^p = Z^p/B^p$, for all $p$.
Each element of $H^p$ is a class obtained by adding each $p$-coboundary to a given $p$-cocycle $c$. 
Then $c$ is a \textit{representative $p$-cocycle} of the cohomology class $c+B^p$, denoted by $[c]$.

\begin{figure}[t!b]
\centerline{
\includegraphics[width=12cm]{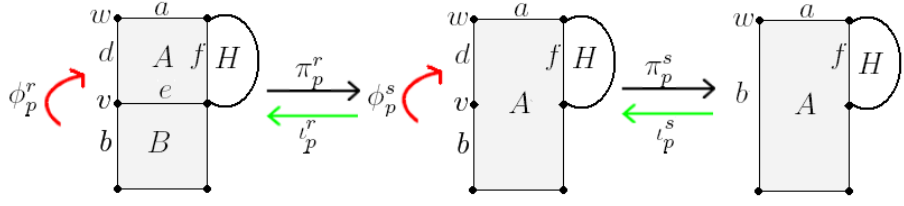}}
\caption{The cell complex $K$,  $K'$ and $K''$;  and the homomorphisms $\phi_p^r, \pi_p^r, \iota_p^r$ and 
$\phi_p^s, \pi_p^s, \iota_p^s$.}
\label{img:relations}
\end{figure}

 \begin{definition}
A set of $p$-cocycles $\{c_1,\dots,c_n\}$ is a \cR{basis} of representative $p$-cocycles if:
\begin{itemize}
\item  Any other $p$-cocycle $c$ can be written as a linear combination of the $p$-cocycles of the set plus a $p$-coboundary, that is:
$$c=\sum_{i=1}^n \lambda_i c_i +\delta^{p-1} e$$
where $\lambda_i=0,1$ and $e$ is a $(p-1)$-cochain.

\item  None of the $p$-cocycles in the set can be written as a linear combination of the rest plus a $p$-coboundary (minimality).
\end{itemize}

\end{definition}

\begin{remark}
If $\{c_1,\dots,c_n\}$ is a \cR{basis} of representative $p$-cocycles then $\{[c_1],$ $\dots,$ $[c_n]\}$ is a \cR{basis} of the $p^{th}$ cohomology group of the object. Each $[c_i]$ is a {\em cohomology generator}.
\end{remark}

\subsection{Integral Operator}

\cR{Starting from a chain  complex,}
$\cdots\stackrel{\partial_2}{\to} C_1\stackrel{\partial_1}{\to} C_0
\stackrel{\partial_0}{\to} 0$, take a $q$-cell $\sigma$  and  a $(q+1)$-chain $\alpha$.
An \textit{integral operator} \cite{GJMMR08} is defined as the set of homomorphisms $\{\phi_p:C_p\to C_{p+1}\}_{p\geq 0}$ such that
$\phi_q(\sigma)=\alpha$, $\phi_q(\mu)=0$ if $\mu$ is a $q$-cell different to $\sigma$, and for all $p \neq q$ and any $p$-cell $\gamma$ we have $\phi_p(\gamma)=0$.
It is extended to all $p$-chains by linearity.

An integral operator $\{\phi_p:C_p\to C_{p+1}\}_{p\geq 0}$ satisfies the \textit{chain-homotopy property} iff $\phi_{p}\partial_{p+1}\phi_p=\phi_p$ for each $p$. 
For $\phi_p$ satisfying the chain-homotopy property, define  $\pi_p=id_p+\phi_{p-1}\partial_p+\partial_{p+1}\phi_p$ where $\{id_p:C_p\to C_p\}_{p\geq 0}$
is the identity. 
\cR{Define $im\pi_p=\{b\in C_p \mid \exists a\in C_p \mbox{ s.t. } \pi_p(a)=b\}$.}
Then, 
$\cdots\stackrel{\partial_2}{\to} im\pi_1\stackrel{\partial_1}{\to} im\pi_0 \stackrel{\partial_0}{\to} 0$ is a chain complex and   $\{\pi_p:C_p\to im\pi_p\}_{p\geq 0}$ is a \textit{chain equivalence} \cite{Munkres}. Its chain-homotopy inverse is the inclusion map $\{\iota_p:im\pi_p\to C_p\}_{p\geq 0}$.
 Integral operators satisfying the chain-homotopy property can be seen as a kind of inverse boundary operator: They raise the dimension and satisfy the nilpotent condition
$\phi_{p+1}\phi_p=0$ for all $p$. Although, in general, $\phi_{p-1}\partial_p\neq id_p$ and $\partial_{p+1}\phi_p\neq id_p$, what happens is  $\phi_p\partial_{p+1}\phi_p=\phi_p$ for all $p$ (which would be equivalent to $x\cdot \frac{1}{x}\cdot x=x$ for $x\in \mathbb{R}\setminus\{0\}$).
Consider, for example, the cell complex $K$ in Fig.~\ref{img:relations} on the left. The integral operator
associated to the removal of the edge $e$ is given by  \cR{ $\phi_1^r(e)=A$.}
 Then,  
 \cR{$\pi^r_1(e)=(id_1+\phi_0^r\partial_1+\partial_2\phi_1^r)(e)=e+a+f+d+e=a+f+d$,}
  $\pi^r_2(A)=0$, $\pi^r_2(B)=A$ ($A+B$ is renamed as $A$ in $K'$) and
 $\pi^r_p$ is the identity over the other $p$-cells of $K$, 
 $p=0,1,2$.
The removal of edge $e$ decreases the degree of vertex $v$ allowing for further simplification.

\begin{figure}[t!b]
\centerline{
\begin{tabular}{|c|c|c|}
\hline 
 & $\phi_p^r$ & $\pi_p^r$ \\\hline
$e$ & $B$ & $a+f+d$ \\
$B$ & $0$ & $0$ \\ 
$A$ & $0$ & $A$ \\
other $p$-cell $\sigma$ & $0$ & $\sigma$ 
\\\hline
\end{tabular}
\hspace{1cm}
\begin{tabular}{|c|c|c|}
\hline 
 & $\phi_p^s$ & $\pi_p^s$ \\\hline
$v$ & $d$ & $w$ \\
$d$ & $0$ & $0$ \\ 
$b$ & $0$ & $b$ \\
other $p$-cell $\sigma$ & $0$ & $\sigma$ 
\\ \hline
\end{tabular}
}
\caption{The homomorphisms $\phi_p^r, \pi_p^r$ and $\phi_p^s,\pi_p^s$.}
\label{img:relations1}
\end{figure}

\cR{The following lemma guarantees the correctness of the down projection procedure for computing cocycles given in Section \cR{\ref{sec:computation}.} Since graph pyramids offer possibilities to reduce the amount of data by local operations, then the search for independent cocycles can be  done on the top of the pyramid. Hence, the following lemma guarantees that these cocycles can be correctly delineated top-down through the levels of the pyramid.   
}

\begin{lemma} \label{lem:cocycle-integral-operator}
Let  $\{\phi_p:C_p\to C_{p+1}\}_{p\geq 0}$ be an integral operator satisfying the chain-homotopy property.
The chain complexes $\cdots\stackrel{\partial_2}{\to} C_1\stackrel{\partial_1}{\to} C_0
\stackrel{\partial_0}{\to} 0$ and
$\cdots\stackrel{\partial_2}{\to} im\pi_1\stackrel{\partial_1}{\to} im\pi_0
\stackrel{\partial_0}{\to} 0$ have isomorphic homology and cohomology groups.
If $c:im\pi_p \to \mathbb{Z}/2$ is a representative $p$-cocycle of a cohomology generator,
then $c\pi: C_p\to \mathbb{Z}/2$ is a representative $p$-cocycle of the same generator. 
\end{lemma}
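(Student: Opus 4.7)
The plan is to reduce the entire statement to the chain equivalence $\{\pi_p\}$, which the excerpt already asserts follows from the chain-homotopy property $\phi_p\partial_{p+1}\phi_p=\phi_p$ with chain-homotopy inverse $\{\iota_p\}$. From here, both claims become standard consequences of algebraic topology duality.

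First I would address the isomorphism of homology and cohomology. Since $\{\pi_p:C_p\to im\pi_p\}$ is a chain equivalence, the general fact that a chain equivalence induces isomorphisms on homology immediately gives $H_p(C_*)\cong H_p(im\pi_*)$ for all $p$. To dualize, apply $\textrm{Hom}(-,\mathbb{Z}/2)$ to the chain-homotopy relations $\iota\pi\simeq id$ on $C_*$ and $\pi\iota\simeq id$ on $im\pi_*$. This yields cochain maps $\pi^{\#}(c):=c\pi$ and $\iota^{\#}(c'):=c'\iota$, together with dual chain homotopies showing that $\pi^{\#}\iota^{\#}\simeq id$ and $\iota^{\#}\pi^{\#}\simeq id$. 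Consequently the induced maps on cohomology are mutually inverse isomorphisms
$$\pi^{*}:H^p(im\pi_*)\longrightarrow H^p(C_*),\qquad \iota^{*}:H^p(C_*)\longrightarrow H^p(im\pi_*).$$

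Next I would show the concrete statement on representative cocycles. Given a representative $p$-cocycle $c:im\pi_p\to\mathbb{Z}/2$, the fact that $\pi$ is a chain map gives $\pi_{p}\partial_{p+1}=\partial_{p+1}\pi_{p+1}$, so
$$\delta(c\pi)\;=\;(c\pi)\partial_{p+1}\;=\;c\bigl(\pi_{p}\partial_{p+1}\bigr)\;=\;c\bigl(\partial_{p+1}\pi_{p+1}\bigr)\;=\;(\delta c)\pi_{p+1}\;=\;0,$$
since $c$ is a cocycle. Hence $c\pi$ lies in the kernel of $\delta^p$ on $C_p$, i.e.\ it is a $p$-cocycle. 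By construction $\pi^{*}[c]=[c\pi]$, and the isomorphism $\pi^{*}$ is precisely the canonical identification of cohomology classes produced in the first paragraph. Thus $c\pi$ represents the cohomology generator corresponding to $[c]$ under this identification, which is the intended meaning of ``the same generator''.

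The main obstacle I anticipate is purely bookkeeping: keeping track of the direction of dualization, and verifying that the dualized homotopy indeed gives $\pi^{\#}$ and $\iota^{\#}$ as homotopy inverses on cochains. Once this is set up, both conclusions fall out mechanically. A secondary subtlety worth a sentence of comment is the interpretation of ``same generator'': the groups $H^p(C_*)$ and $H^p(im\pi_*)$ are not literally equal, but the isomorphism $\pi^{*}$ is canonical, and the lemma is precisely the statement that $c$ and $c\pi$ correspond under it.
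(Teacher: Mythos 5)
Your proposal is correct and follows essentially the same route as the paper: both reduce everything to the fact that an integral operator with the chain-homotopy property makes $\{\pi_p\}$ a chain equivalence (with homotopy inverse $\{\iota_p\}$), which induces isomorphisms on homology and cohomology. The only difference is that you explicitly carry out the dualization and the verification that $c\pi$ is a cocycle, whereas the paper delegates these standard steps to Munkres in a one-line argument.
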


\begin{proof} An  \cR{integral} operator that satisfies the chain homotopy property, is a chain homotopy of the identity $\{id_p:C_p\to C_p\}_{p\geq 0}$
to $\{\iota_p\pi_p:C_p\to C_p\}_{p\geq 0}$. Therefore, $\{\pi_p:C_p\to im\pi_p\}_{p\geq 0}$ is a chain equivalence and  chain equivalences induce 
 isomorphisms on homology and cohomology (see \cite{Munkres}).
\qed
\end{proof}

For example, consider the cell complex $K'$ of Fig.~\ref{img:relations}. The $1$-cochain $\alpha^*$, defined  by the set $\{b,f\}$ of edges  
of $K'$, is a $1$-cocycle which `blocks' the white hole $H$ (in the sense that all the cycles representing the hole must contain an odd number of edges of $\alpha^*$). 
Then $\beta=\alpha^*\pi^r_1$ is defined by the set  $\{b,f,e\}$ of edges of $K$.
$\alpha$ and $\beta$ are both $1$-cocycles representing the same white hole $H$.

%%%%%%%%%%%%%%%%%%%%%%%%%%%%%%%%%%%%%%%%%%%%%%%%%%%%%%%%%%%%%%%%%%%%%%%%%%%%%%%%%%%%%%%%%%%%%%%%%%%55
\section{Preserving Topology in Irregular Graph Pyramids}
\label{sec:preserving-topology}

Considering binary images, an \begin{it}object\end{it} is defined by a connected set (4-connectivity) of foreground pixels. 
\cR{A partition of the whole space (foreground and background) in cells is called  a \it{cell subdivision}. } 
The referred partition could be obtained from any of the planar graphs in every level of the pyramid.

 \cR{Fix a level $(G_i,\bar{G}_i)$, the} \textit{cell complex}   associated to the foreground object, called \textit{boundary cell complex}, denoted by $K^i$, is obtained 
\cR{from $(G_i,\bar{G}_i)$}
by taking all faces of $\bar{G}_{i}$ corresponding to vertices of $G_i$, whose receptive fields contain (only) foreground pixels, and adding all edges and vertices needed to represent the faces. The $p$-chain group generated by the $p$-cells of $K^i$ is denoted by $C_p(K^i)$.

\cR{The following lemma guarantees that the local operations applied to build a graph pyramid preserve the topological properties of the initial data.}

\begin{lemma}
\label{lem:pyr-homeomorphic}
The boundary cell complex is well-defined. All the boundary cell complexes of a given irregular dual graph pyramid are cell subdivisions of the same object. Therefore, all these cell complexes are homeomorphic.
\end{lemma}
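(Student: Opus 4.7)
The plan is to prove the three assertions in order: well-definedness, common subdivision, and homeomorphy.

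For well-definedness, I would invoke the planarity of $\bar{G}_i$ together with its intrinsic embedding inherited from the primal-dual pairing with $G_i$: the faces of $\bar{G}_i$ are unambiguously determined, and the primal-dual correspondence gives a bijection between vertices of $G_i$ and faces of $\bar{G}_i$. The selection rule (``keep the face iff the receptive field of the dual vertex contains only foreground pixels'') is deterministic because the receptive fields are fixed once the pyramid has been constructed. The bounding $1$-cells and $0$-cells of the selected $2$-cells are then forced.

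The heart of the argument is the second assertion, which I would establish by induction on the level~$i$. The base case $K^0$ is the standard cell complex obtained from the $4$-connected foreground pixels. For the inductive step, I would examine each move used in line~\ref{lin:contract-and-simplify} of Algorithm~\ref{alg:buildpyr} and verify that the underlying point set of $K^i$ coincides with that of $K^{i-1}$. First, contracting an edge of $G_{i-1}$ whose two endpoints share the same label merges two adjacent regions of the same label; in the foreground case the two $2$-cells together with their shared boundary arc fuse into one $2$-cell (still a topological disk), while in the background case the change does not touch $K^{i-1}$ at all. Second, contracting a pending tree in $\bar{G}'$ (equivalently, removing the corresponding edges from $G'$) deletes edges and vertices that sit in the interior of a single face of $\bar{G}'$ and carry no boundary between distinct labels, so the underlying point set is untouched. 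Third, contracting one of the two edges incident to a degree-$2$ vertex of $\bar{G}''$ simply concatenates two consecutive boundary arcs into a single arc, a combinatorial refinement that does not alter the realisation. Combining these three observations, the underlying topological space of $K^i$ equals that of $K^{i-1}$, and by induction equals that of $K^0$, i.e.\ the foreground object.

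The third assertion then becomes immediate: any two cell decompositions of the same underlying point set are homeomorphic via the identity map of that set. The main obstacle I anticipate is the careful geometric justification of the ``simplification'' moves (line~\ref{lin:contract-and-simplify}.ii--iii), specifically arguing that the self-loops and pending trees actually removed by the algorithm genuinely live inside a single face of $\bar{G}$ and therefore cannot carry information about the foreground/background interface. The cleanest way to handle this is through primal-dual duality --- using that edge removal in $G$ equals edge contraction in $\bar{G}$ and vice versa --- so that each simplification can be interpreted as either merging two equally-labeled regions or collapsing topologically trivial boundary structure, both of which clearly preserve the realisation.
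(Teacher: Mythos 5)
Your proposal is correct and follows essentially the same route as the paper: the paper's proof likewise argues that each operation (edge removal between two same-labeled regions, and edge contraction at degree-$2$ or pendant vertices) only produces new $p$-cells homeomorphic to $p$-balls, so every level remains a cell subdivision of the same object and homeomorphy follows. Your version merely makes explicit the induction over levels and the case analysis that the paper compresses into a single sentence.
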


\begin{proof} Our input is a binarized 2D digital image. An object is the $4$-connected set  of foreground pixels.
Since we only remove an edge in $\bar{G}_k$ when it is in the boundary of two different regions that have the same label (region merging), and contract an edge in $\bar{G}_k$ when it is incident to a vertex of degree $2$ or it is a pendant edge (simplification), all the new $p$-cells  created are homeomorphic to $p$-dimensional balls, $p=0,1,2$. 
\qed
\end{proof}

As a result of Lemma~\ref{lem:pyr-homeomorphic}, topological invariants computed on different levels of the pyramid are equivalent.

For the purpose of this paper, a new  \cR{cell complex} called \textit{homology-generator level}, is added over the 
\cR{boundary cell complex obtained from} the  \cR{top (last)} level of the pyramid (see Fig.~\ref{img:cocycle}).  \cR{This new  cell complex}  \cR{is denoted by $K^{\scst H}$ and it} is a set of regions surrounded by a set of self\cR{-}loops incident to a single vertex. To obtain this 
\cR{cell complex,}
 on the top of the computed pyramid, we compute a spanning tree \cR{of the boundary graph of the top level of the pyramid}, and contract all the edges that belong to it.  Note that $K^{\scst H}$ is no longer homeomorphic to any $K^i$, but homotopic.

\begin{figure}[tb]
\begin{center}
\includegraphics[width=14cm]{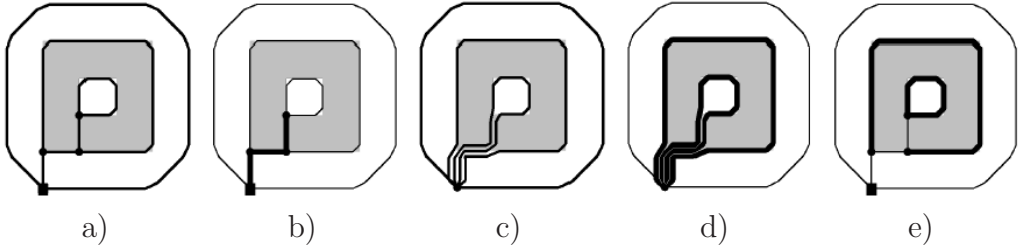} 
\\
a) \hspace{2.1cm} b) \hspace{2.1cm} c) \hspace{2.1cm} d) \hspace{2.1cm} e) \\
 \end{center}
\vspace{-2mm}
\caption{a) \cR{Boundary cell complex $K_4$ obtained from the top level of the  pyramid, $(G_4,\bar{G}_4)$;} b) in bold, spanning tree edges \cR{of $\bar{G}_4$;} c) \cR{homology-generator level, $K^{\scst H}$;} d) \cR{in bold, the self\cR{-}loops representing the cocycle edges in $K^{\scst H}$;} e) in bold, the cocycle edges in top level.}
\label{img:cocycle}
\end{figure}

\begin{lemma}
\label{lem:oper-chain-homotopy}
The two operations used to construct an irregular graph pyramid: edge removal and edge contraction, are
integral operators satisfying the chain-homotopy property.
\end{lemma}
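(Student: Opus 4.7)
The plan is to exhibit, for each of the two operations, an explicit integral operator $\phi$ supported on a single cell and to verify the defining equation $\phi_p\partial_{p+1}\phi_p=\phi_p$ by a one-line cellular computation in $\mathbb{Z}/2$. Because $\phi$ vanishes on every cell except one, the identity only needs to be checked at that single cell, where all other terms drop out modulo $2$; on every other cell both sides are trivially zero. The induced $\pi_p=id_p+\phi_{p-1}\partial_p+\partial_{p+1}\phi_p$ can then be read off to confirm that the geometric effect coincides with the corresponding pyramid operation.

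First I would handle the removal of a $1$-cell $e$ bounding two distinct $2$-cells $A$ and $B$ (the cellular counterpart of contracting an edge of $G_k$, equivalently removing an edge of $\bar G_k$, during region merging). Set $\phi^r_1(e):=A$ and $\phi^r_p(\sigma):=0$ for every other cell $\sigma$. The only $1$-cell in $\partial_2 A$ with a nonzero $\phi^r_1$-image is $e$ itself, so
\[
\phi^r_1\,\partial_2\,\phi^r_1(e)=\phi^r_1(\partial_2 A)=\phi^r_1(e)=A,
\]
which is exactly the chain-homotopy condition. The induced $\pi^r$ then kills $A$ and renames $B$ to $A+B$, reproducing the merging of the two faces as in the example of Fig.~\ref{img:relations}.

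Next I would handle the contraction of a $1$-cell $e$ with distinct endpoints $v$ and $w$ (the counterpart of removing an edge in $G_k$, equivalently contracting one in $\bar G_k$, during the simplification step). Set $\phi^s_0(v):=e$ and $\phi^s_p(\sigma):=0$ for every other cell $\sigma$. Then
\[
\phi^s_0\,\partial_1\,\phi^s_0(v)=\phi^s_0(\partial_1 e)=\phi^s_0(v+w)=e,
\]
and the induced $\pi^s$ identifies $v$ with $w$ while eliminating $e$.

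The only real subtlety is that these formulas presuppose $A\neq B$ and $v\neq w$, i.e.\ we are neither removing a bridge that bounds the same face on both sides nor contracting a self-loop; in those degenerate cases the right-hand sides of the computations above collapse to $0$ and the property fails. I would dispatch this by invoking the construction rules of Algorithm~\ref{alg:buildpyr}: region merging in Line~5.i acts only on edges of $G_k$ separating two different vertices of $G_k$ (hence two distinct faces of the cell complex), while the pending-tree contractions and the degree-$2$ simplifications in Lines~5.ii-5.iii operate along tree edges in $\bar G_k$, whose endpoints are necessarily distinct. This compatibility check between the pyramid's admissible moves and the non-degeneracy hypothesis of the integral operators is where I expect the only real care is needed; the chain-homotopy equalities themselves are immediate one-line computations.
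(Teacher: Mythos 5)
Your proposal is correct and follows essentially the same route as the paper: the paper's proof likewise just exhibits the single-cell integral operators $\phi^r_1(e)=B$ for removal and $\phi^s_0(v)=d$ for contraction and points to the worked example in the figures. You go somewhat further by actually writing out the one-line verification of $\phi_p\partial_{p+1}\phi_p=\phi_p$ and by explicitly checking that the algorithm's admissible moves exclude the degenerate cases ($A=B$ or $v=w$), both of which the paper leaves implicit.
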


\begin{proof}
\cR{Fix a level} $(G_i,\bar{G}_i)$, suppose an edge $e$ in $\bar{G}_i$ is removed. Since $\bar{G}_i$ is planar, then $e$ is in the boundary of two $2$-cells (or regions) $A$ and $B$ (see Fig. \ref{img:relations}). The integral operator $\phi^r$ associated to this edge removal is given by $\phi^r_1(e)=B$ (see Fig. \ref{img:relations1}).  
Now, suppose that an edge $d$ of $\bar{G}_i$, with a vertex $v$ of degree $2$ in its boundary, is contracted (see Fig. \ref{img:relations}).   The integral operator $\phi^s$ associated to this edge contraction is given by $\phi^s_0(v)=d$ (see Fig. \ref{img:relations1}).
\qed
\end{proof}

\cR{Starting from a cell decomposition of an object, its homology studies incidence relations of its subdivision. Cohomology arises from the algebraic dualization of the construction of homology. Both homology and cohomology are homotopy invariants.}

\begin{corollary}
The boundary cell complex of any level of the pyramid and the  homology-generator level
have isomorphic homology and cohomology groups.
\end{corollary}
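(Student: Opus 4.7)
The plan is to chain together three facts that are already in the paper. First, Lemma~\ref{lem:pyr-homeomorphic} tells us that all boundary cell complexes $K^i$ along the pyramid are cell subdivisions of the same underlying object and therefore pairwise homeomorphic; since homology and cohomology are homeomorphism invariants (in fact even homotopy invariants), this immediately gives isomorphic $H_p$ and $H^p$ for every pair of levels. So the whole problem reduces to comparing the top level's boundary cell complex $K^n$ with the homology-generator level $K^{\scst H}$.

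Next I would analyze exactly what happens when we pass from $K^n$ to $K^{\scst H}$. By construction, $K^{\scst H}$ is obtained by contracting all edges of a chosen spanning tree $T$ in $\bar{G}_n$. Each individual edge contraction of an edge $d$ incident to some vertex $v$ is, by Lemma~\ref{lem:oper-chain-homotopy}, an integral operator $\phi^s$ with $\phi_0^s(v)=d$ satisfying the chain-homotopy property. Lemma~\ref{lem:cocycle-integral-operator} then says that a single such contraction induces isomorphisms on both homology and cohomology between the ``before'' chain complex $C_*$ and the ``after'' chain complex $im\,\pi_*$.

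To finish, I would iterate this argument along a fixed ordering $d_1,d_2,\ldots,d_m$ of the spanning tree edges. At each step $j$, the intermediate cell complex has a chain complex isomorphic (on homology and cohomology) to the previous one via the chain equivalence $\pi_*$ supplied by Lemma~\ref{lem:cocycle-integral-operator}. Composing these chain equivalences yields a single chain equivalence between the chain complex of $K^n$ and that of $K^{\scst H}$, hence the required isomorphism of homology and cohomology groups. Combining this with the first step (any $K^i\cong K^n$ topologically) gives the corollary.

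The only subtlety I expect is making sure that at every intermediate stage the edge being contracted still satisfies the hypotheses of Lemma~\ref{lem:oper-chain-homotopy} (its endpoint has the right structure for $\phi^s$ to be a well-defined integral operator), which is ensured by the fact that contracting a spanning tree edge never creates self-loops and always identifies two distinct vertices. Once that is observed, the rest is a straightforward composition of the chain equivalences already produced by the earlier lemmas, so no new calculation is really needed.
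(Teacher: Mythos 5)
Your proposal is correct and follows essentially the same route the paper intends: the corollary is stated there without a separate proof precisely because it is the composition of Lemma~\ref{lem:pyr-homeomorphic} (all boundary cell complexes are homeomorphic), Lemma~\ref{lem:oper-chain-homotopy} (removals and contractions are integral operators with the chain-homotopy property), and Lemma~\ref{lem:cocycle-integral-operator} (such operators induce isomorphisms on homology and cohomology), exactly as you chain them. Your closing remark about spanning-tree contractions is also consistent with the paper's own discussion of defining $\phi_0(v_i)=e_j$ along the oriented edges of a spanning forest, so no gap remains.
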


As a consequence of  Lemmas \ref{lem:cocycle-integral-operator} and \ref{lem:oper-chain-homotopy} we have:

\begin{lemma}\label{lem:cocycle-pyr}
\cR{Fix a level} \cR{$(G_i,\bar{G}_i)$,} suppose an edge $e$ in $\bar{G}_i$, which is in the boundary of a region $B$, is removed. Let $\phi^r_1(e)=B$ be the integral operator associated to this removal. Let $\alpha^*$ be a $1$-cocycle defined by a set of edges $S$ in $\bar{G}_i\setminus\{e\}$. 
If an odd number of edges of $\alpha^*$ is in $B$, then $S\cup\{e\}$ defines a $1$-cocycle in $K^i$; otherwise, it is $S$ which defines a $1$-cocycle in $K^i$.
\end{lemma}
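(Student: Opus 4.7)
The plan is to apply Lemma~\ref{lem:cocycle-integral-operator} to the integral operator $\phi^r$ associated with removing $e$, which by Lemma~\ref{lem:oper-chain-homotopy} satisfies the chain-homotopy property. Since $\alpha^*$ lives on the reduced complex $im\pi^r$ and is a $1$-cocycle there, that lemma immediately yields that $\alpha^*\pi^r_1:C_1(K^i)\to \mathbb{Z}/2$ is a $1$-cocycle on $K^i$ representing the same cohomology generator. Hence the task reduces to identifying the set of edges of $K^i$ on which the lifted cochain $\alpha^*\pi^r_1$ evaluates to $1$.

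First I would compute $\pi^r_1 = id_1 + \phi^r_0\partial_1 + \partial_2\phi^r_1$ explicitly on edges. By the definition of the integral operator attached to a single cell, $\phi^r_p$ vanishes on every cell except at dimension $1$ on $e$, where $\phi^r_1(e)=B$; in particular $\phi^r_0=0$, which kills the middle term. Consequently $\pi^r_1$ acts as the identity on every edge $\sigma\neq e$, while $\pi^r_1(e)= e + \partial_2(B)$. Writing $\partial_2(B)=e+e_1+\cdots+e_k$, where $e_1,\dots,e_k$ are the other edges of the boundary of $B$ in $\bar{G}_i$, one obtains $\pi^r_1(e)=e_1+\cdots+e_k$ over $\mathbb{Z}/2$.

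Next I would evaluate the lifted cocycle pointwise. For any edge $\sigma\neq e$, $(\alpha^*\pi^r_1)(\sigma)=\alpha^*(\sigma)$, which is $1$ exactly when $\sigma\in S$. For $e$, linearity gives $(\alpha^*\pi^r_1)(e)=\alpha^*(e_1)+\cdots+\alpha^*(e_k)$, i.e.\ the parity modulo $2$ of the number of edges of $S$ lying in the boundary of $B$. Identifying the resulting cochain with its support of $1$-cells then produces the two cases of the statement: if that parity is odd, the support is $S\cup\{e\}$; if it is even, the support is $S$. Combined with Lemma~\ref{lem:cocycle-integral-operator}, this support is a $1$-cocycle in $K^i$.

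The main thing to check carefully is the vanishing of $\phi^r_0$ (so the $\phi^r_0\partial_1$ term disappears) and the compatibility between the set-theoretic description of a cochain and its action as a homomorphism, since the whole conclusion is phrased in terms of edge sets rather than of homomorphisms. Both points are immediate from the definitions, after which the proof is a one-line evaluation of $\alpha^*\pi^r_1$ on $e$.
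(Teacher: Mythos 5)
Your proposal is correct and follows exactly the route the paper intends: the paper states this lemma as an immediate consequence of Lemmas~\ref{lem:cocycle-integral-operator} and~\ref{lem:oper-chain-homotopy} and gives no further argument, while you supply the (correct) explicit computation of $\pi^r_1=id_1+\partial_2\phi^r_1$ and the evaluation of $\alpha^*\pi^r_1$ on $e$, which yields precisely the stated parity dichotomy. This matches the paper's worked example ($\pi^r_1(e)=a+f+d$ in Fig.~\ref{img:relations}), so your write-up is a faithful, more detailed version of the same proof.
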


In terms of embedded graphs, an integral operator maps a vertex/point to exactly one of its incident edges and an edge to exactly one of its incident faces.
In every level of a graph pyramid, the contraction kernels make up a spanning forest. A forest composed of $k$ connected components, spanning a graph with $n$ vertices, has $k$ root vertices, $n-k$ other vertices, and also $n-k$ edges.
These edges can be oriented toward the respective root such that each edge has a unique starting vertex.
Then, integral operators mapping the starting vertices to the corresponding edge of the spanning forest can be defined as follows: $\phi_0(v_i) = e_j$, where 
$e_j$ is the edge incident to $v_i$, oriented away from it.

\cR{The following lemma guarantees that all integral operators that create homeomorphisms are in fact a combination of the two operations used to construct irregular graph pyramids.}

\begin{lemma} All integral operators that create homeomorphisms can be represented in a dual graph pyramid. This is equivalent to: given an input image $(G_0,\bar{G}_0)$ and its associated cell complex $Z = \{C_0, C_1, C_2\}$,  a cell complex $Z' = \{C_0', C_1', C_2'\}$ with $Z, Z'$ homeomorphic, and $Z$ a refinement of $Z'$ i.e. $C_0' \subseteq C_0,$ 
$C_1' \subseteq C_1,$ and $C_2' \subseteq C_2$, then there exists   a pyramid $P$ s.t. $Z'$ is the cell complex associated to some level  $(G_k,\bar{G}_k), k \geq 0,$ of $P$.
\end{lemma}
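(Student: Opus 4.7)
The proof I would give is constructive. I aim to build a pyramid $P$ whose operations are chosen so that the boundary cell complex associated to some level $(G_k,\bar G_k)$ coincides with $Z'$. The key structural observation is that because $Z$ and $Z'$ are homeomorphic and $Z$ refines $Z'$, every $p$-cell $\sigma'$ of $Z'$ is, as a subset of the underlying space, a union of $p$-cells of $Z$ that together form a topological $p$-ball subdivided in a tame way: for $p=2$, a closed disk decomposed into subdisks; for $p=1$, an arc decomposed into sub-arcs meeting only at degree-$2$ interior vertices; for $p=0$, a single $0$-cell of $Z$. No interior cycle, handle, or extra boundary component is introduced by the refinement.

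First, I would encode this refinement as a vertex labeling of $G_0$: each vertex of $G_0$ (corresponding to a $2$-cell of $Z$) receives the label of the unique $2$-cell of $Z'$ containing it. I then run Algorithm~\ref{alg:pyramid} under this labeling. The construction proceeds in two phases matching the substeps of line~\ref{lin:contract-and-simplify} of the algorithm. In the \emph{region-merging phase}, for every $f'\in C_2'$, I look at the set $F(f')$ of $2$-cells of $Z$ contained in $f'$. Since $f'$ is a disk and the subdivision by $F(f')$ is a cell decomposition of that disk, the dual graph of the subdivision is connected, and a spanning tree of it corresponds to a set of $\bar G_0$-edges interior to $f'$. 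Removing those $\bar G_0$-edges (equivalently, contracting the corresponding $G_0$-edges between same-labeled vertices in line~\ref{lin:contract-and-simplify}.i) collapses $F(f')$ to a single surviving vertex and eliminates all $1$-cells of $Z$ interior to $f'$. Performing this for every $f'$ over finitely many iterations of the outer loop yields a graph whose faces are exactly the $2$-cells of $Z'$.

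In the \emph{simplification phase}, the remaining cells of $C_1\setminus C_1'$ and $C_0\setminus C_0'$ are removed. By the refinement hypothesis, every $1$-cell $e'\in C_1'$ is currently realized as a chain of edges meeting at degree-$2$ vertices, and line~\ref{lin:contract-and-simplify}.iii of Algorithm~\ref{alg:pyramid} collapses each such chain into a single edge equal to $e'$. Any pendant $1$- or $0$-cells that appear inside merged faces are eliminated by the pending-tree contractions of line~\ref{lin:contract-and-simplify}.ii. After finitely many iterations of the outer loop, the level $(G_k,\bar G_k)$ produced has boundary cell complex exactly $Z'$. Lemma~\ref{lem:pyr-homeomorphic} then confirms that the intermediate levels are homeomorphic to $Z$ (hence to $Z'$), and Lemma~\ref{lem:oper-chain-homotopy} packages the whole construction as a composition of integral operators satisfying the chain-homotopy property.

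The main obstacle lies in the region-merging phase: I need to justify that the $\bar G_0$-edges interior to each $f'$ can be removed sequentially while, at each removal, the two incident faces remain \emph{distinct} (as required for a legitimate merge) and no spurious $1$-cycle or extra boundary component is introduced. This reduces to the combinatorial fact that the dual graph of a cell decomposition of a disk admits a spanning tree, and that contracting spanning-tree edges in this dual graph merges adjacent $2$-cells one at a time without changing the topology of the ambient space. Once this is settled, the rest is routine bookkeeping driven by Algorithm~\ref{alg:pyramid}.
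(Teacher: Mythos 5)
The paper states this lemma \emph{without any proof} --- it appears at the end of Section~4 as a bare assertion --- so there is no argument of the authors' to compare yours against. Your constructive strategy (label each vertex of $G_0$ by the $2$-cell of $Z'$ containing it, let the region-merging step collapse each $F(f')$ along a spanning tree of its dual graph, and let the simplification steps absorb the leftover $1$- and $0$-cells) is the natural way to discharge the claim and is consistent with how the paper uses its pyramid machinery elsewhere (Lemmas~\ref{lem:pyr-homeomorphic} and~\ref{lem:oper-chain-homotopy}). In that sense you are supplying something the paper omits rather than diverging from it.

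One step in your sketch still needs an argument, and it is not quite the one you flag as the ``main obstacle.'' Choosing a spanning tree of the dual graph of the decomposition of a disk $f'$ and removing the corresponding interior edges of $\bar G_0$ is unproblematic (each removal merges two still-distinct faces, exactly as in Lemma~\ref{lem:oper-chain-homotopy}). The real issue is the \emph{cotree}: after the spanning-tree removals, the interior $1$-cells of $Z$ not dual to tree edges survive, now bounded by the single merged face on both sides, and your plan eliminates them via the pending-tree contraction of line~\ref{lin:contract-and-simplify}.ii. That step only applies if these leftover edges form a forest (so that a degree-one vertex always exists). You should argue this explicitly: if the surviving interior edges contained a cycle, that cycle would separate the disk $f'$ into two nonempty unions of $2$-cells of $Z$, and any dual spanning-tree path joining a cell inside the cycle to one outside would have to cross it, i.e.\ would use an edge dual to a cycle edge --- contradicting the fact that all cycle edges were kept. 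This planar tree--cotree argument closes the gap; the degree-$2$ chain simplification and the final appeal to Lemmas~\ref{lem:pyr-homeomorphic} and~\ref{lem:oper-chain-homotopy} are then routine, as you say. You should also state explicitly that you read the hypothesis ``$Z$ a refinement of $Z'$'' as meaning each cell of $Z'$ is a union of cells of $Z$ (the literal inclusions $C_p'\subseteq C_p$ in the statement do not say that), since your whole construction rests on that reading.
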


%%%%%%%%%%%%%%%%%%%%%%%%%%%%%%%%%%%%%%%%%%%%%%%%%%%%%%%%%%%%%%%%%%%%%%%%%%%%%%%%%%%%%%%%%%%%%%%%%%55
\section{Representative Cocycles in Irregular Graph Pyramids}
\label{sec:computation}

A method for efficiently computing representative cycles of homology generators using an irregular graph pyramid is given in \cite{Peltier09a}.
In \cite{IIKG08} a novel algorithm for correctly visualizing
graph pyramids, including multiple edges and self-loops is given. This algorithm preserves the geometry and the topology of the original image \cA{and has been used to produce the images throughout the paper} (see Fig.~\ref{img:PyramidDrawing})

\begin{figure}[tb]
\begin{center}
Level 0 \hspace{1.3cm} Level 1\hspace{1.3cm} Level 2\hspace{1.3cm} Level 3\hspace{1.3cm} Level 4\\
\includegraphics[width=14cm]{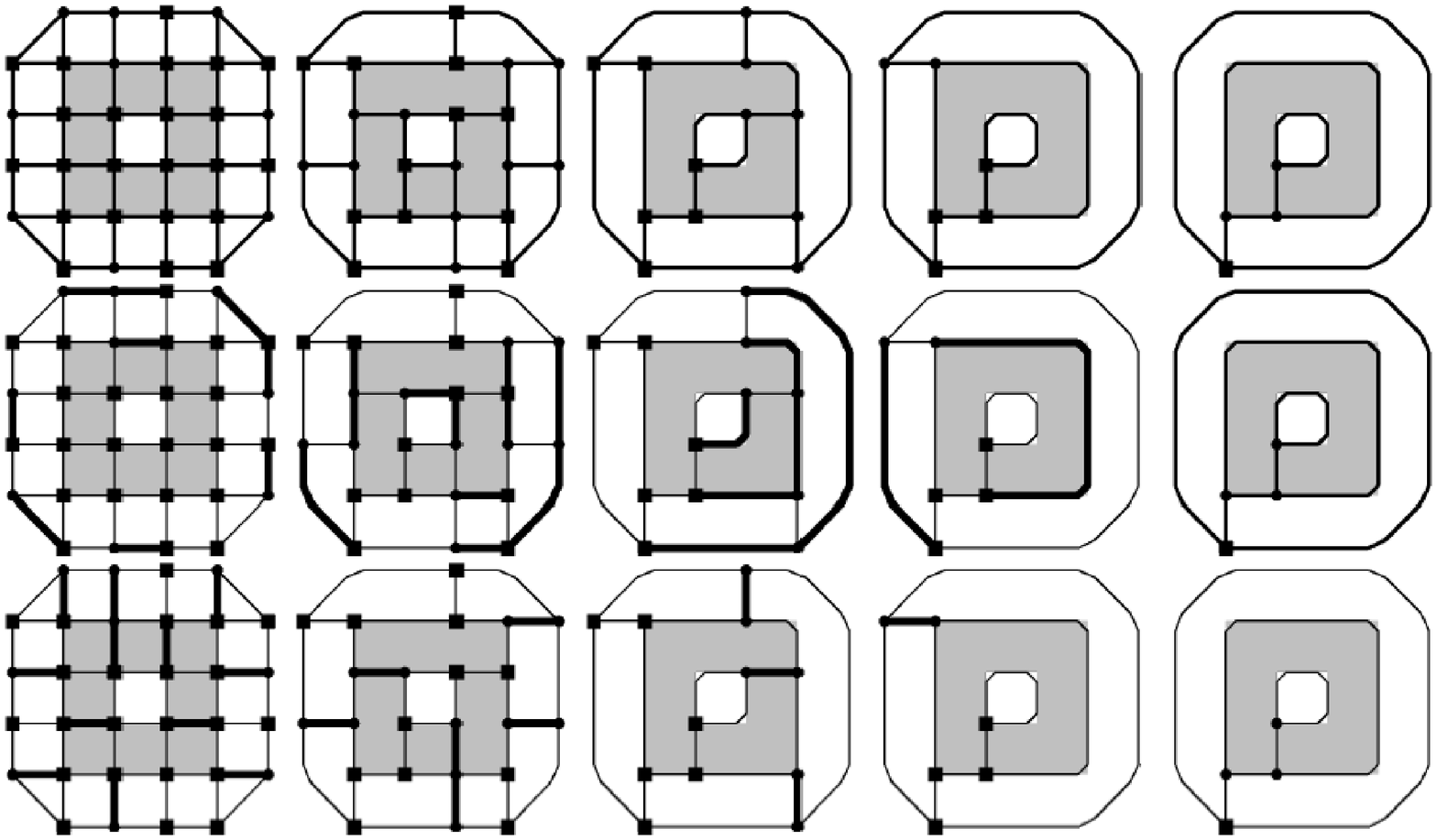}
\\
\end{center}
\caption{Top row, from left to right: boundary graphs for all levels of the pyramid. Vertices surviving to the next level are drawn with a square. Middle row, in bold: contracted edges in the respective levels. Bottom row: removed edges in the boundary graph, equivalent to contracted edges in the adjacency graph.
}
\label{img:PyramidDrawing}
\end{figure}

In this paper, representative cocycles are computed and drawn in the boundary graph of any level of a given irregular graph pyramid. They are computed in the top  level and down projected using the described process. \\

In the  homology-generator level (see Fig.~\ref{img:cocycle}.c),  each self-loop $\alpha$ that surrounds a region of the background (hole of a region $R$ of the foreground) is a representative $1$-cycle of a homology generator.

Let $K^{\scst H}$ be the  homology-generator level.
\cR{
Without loss of generality, we can suppose that $K^{\scst H}$ is connected. 
If not, repeat the following reasoning for each connected component (region) of $K^{\scst H}$.}
Let $\{\alpha_1,\dots,\alpha_n\}$ be the set of the self-loops surrounding a face of the background.  Therefore, there are $n$ white holes: $O_1,\dots O_n$ (see Fig.~\ref{img:cocycle}.d).
\cR{Fix $i$,} $i=1,\dots,n$, $\alpha_i$ is a representative $1$-cycle of the homology generator associated to the white hole  $O_i$.
Let $\beta$ be a self-loop
surrounding  \cR{the} face $f$ of the foreground \cR{(recall that we suppose that $K^{\scst H}$ is connected)} such that $\alpha_i$ is in the boundary of $f$ in $K^{\scst H}$.
Form  the sets $\{\alpha_1,\beta\},\dots,\{\alpha_n,\beta\}$. 
Let $K^0$ denote the boundary cell complex associated to the foreground in $\bar{G}_0$. Let $\{\phi_p:C_p(K^0)\to C_{p+1}(K^0)\}_{p\geq 0}$ be the composition of all integral operators associated with all  removals and contractions of edges of the foreground of the boundary graphs of a given irregular graph pyramid.  Let $\{\pi_p=id_p+\phi_{p-1}\partial_p+\partial_{p+1}\phi_p: C_p(K^0)\to C_p(K^{\scst H})\}_{p\geq 0}$ where
$\{\iota_p: C_p(K^{\scst H})\to C_p(K^0)\}_{p\geq 0}$ is the inclusion map.

\begin{prop}
The $1$-cochain $\alpha^*_i$ defined by the set $\{\alpha_i,\beta\}$ in $K^{\scst H}$ is a $1$-cocycle. Moreover, the set $\{\alpha_1^*,\dots,\alpha_n^*\}$ is a basis of representative $1$-cocycles.
\end{prop}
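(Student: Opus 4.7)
My plan proceeds in three stages: first, analyze the cell structure of $K^{\scst H}$; second, verify the cocycle condition for each $\alpha_i^*$; third, establish basis-hood by dimension counting together with a direct check of linear independence.

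For the structural analysis, I would argue that after contraction of a spanning tree of the top boundary graph, all vertices collapse into a single vertex $v$ and every remaining edge becomes a self-loop at $v$. The maximal-simplification property of Algorithm~\ref{alg:buildpyr}, together with the connectedness hypothesis on $K^{\scst H}$, forces a unique foreground $2$-cell $f$. An Euler-characteristic count on the top boundary graph (number of non-tree edges $=$ \#faces $- 1 = n+1$) shows that the self-loops of $K^{\scst H}$ are exactly the $n$ loops $\alpha_i$ around the holes together with the one loop $\beta$ around $f$; consequently $\partial f = \alpha_1 + \cdots + \alpha_n + \beta$ mod $2$.

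Given this structure, the cocycle condition becomes a one-line computation: since $f$ is the unique $2$-cell, $\delta\alpha_i^*(f) = \alpha_i^*(\partial f) = \alpha_i^*(\alpha_i) + \alpha_i^*(\beta) = 1 + 1 = 0$ mod $2$, using that $\alpha_i^*$ is the indicator of $\{\alpha_i,\beta\}$. For basis-hood, I would first observe that every edge is a self-loop at $v$, so $\partial_1(e) = v+v = 0$ mod $2$; hence $\delta^0 = 0$, $B^1 = 0$, and $H^1 = Z^1$. A rank count for $\delta^1 : C^1 \to C^2$ (all $n+1$ generators map to $f$, giving rank $1$) gives $\dim Z^1 = n$. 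Linear independence of $\{\alpha_1^*, \ldots, \alpha_n^*\}$ is immediate: any nonempty $\mathbb{Z}/2$-combination $\sum_{i\in S} \alpha_i^*$ evaluates to $1$ on any $\alpha_j$ with $j\in S$. Having $n$ linearly independent cocycles in the $n$-dimensional space $Z^1$, they span $Z^1 = H^1$; both clauses in the definition of basis of representative $1$-cocycles follow at once since $B^1 = 0$.

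The hardest step will be the structural analysis in the first stage---justifying that spanning-tree contraction leaves exactly $n+1$ self-loops in the claimed roles and that $\partial f$ has the claimed form. This rests on the interplay between planar-graph duality, the Euler characteristic, and the termination conditions of Algorithm~\ref{alg:buildpyr}; once established, the remaining arguments reduce to routine linear algebra over $\mathbb{Z}/2$.
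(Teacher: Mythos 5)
Your proof is correct. The first two stages coincide with the paper's argument: the paper likewise identifies the edges of $K^{\scst H}$ as the self-loops $\alpha_1,\dots,\alpha_n$ around the holes plus the single loop $\beta$ around the unique foreground face $f$, and verifies the cocycle condition by exactly your computation $\delta\alpha_i^*(f)=\alpha_i^*(\partial_2 f)=\alpha_i^*(\alpha_i)+\alpha_i^*(\beta)+0=0$ (your Euler-characteristic justification that there are precisely $n+1$ self-loops in these roles is more explicit than the paper, which simply asserts the structure of the homology-generator level). Where you genuinely diverge is the basis claim. The paper proves only minimality, and does so by a direct set-comparison: if $\alpha_1^*$ were a sum of other $\alpha_{j}^*$'s, the defining edge set would be $\{\alpha_{j_1},\dots,\alpha_{j_s}\}$ or $\{\alpha_{j_1},\dots,\alpha_{j_s},\beta\}$, never $\{\alpha_1,\beta\}$; it silently omits both the coboundary term and the spanning clause of the definition. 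Your route---observing that every edge is a self-loop at the single vertex, so $\partial_1=0$, $\delta^0=0$, $B^1=0$ and $H^1=Z^1$, then counting $\dim Z^1=n$ via the rank of $\delta^1$ and checking independence by evaluation on the $\alpha_j$---buys you both clauses at once and makes explicit the fact ($B^1=0$) that the paper's minimality argument tacitly relies on. The trade-off is that your argument leans on the structural claim $\partial_2 f=\alpha_1+\cdots+\alpha_n+\beta$ and on connectedness/maximal merging at the top level, which you correctly flag as the delicate step; the paper's direct check avoids the dimension count but is less complete.
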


\begin{proof}
\cR{The set of edges of $K^{\scst H}$ is the set of the self-loops $\{\alpha_1,\dots,\alpha_n\}$ surrounding a region of the background together with the self-loop $\beta$, renamed by $\alpha_{n+1}$, surrounding the face $f$.}
 
\cR{The $1$-cochain $\alpha_i^*$ is a cocycle in $K^{\scst H}$ iff} $\delta_1(\alpha^*_i)=0$. Since we work with objects embedded in $\mathbb{R}^2$ then $\alpha_i$ can only be in the boundary of two faces. In this case, one face belongs to the background and the other face is $f$ in $K^{\scst H}$. Then $\delta_1(\alpha^*_i) (f)=\alpha^*_i(\partial_2(f))=\alpha^*_i(\alpha_i+\beta+\cdots)=\alpha^*_i(\alpha_i)+\alpha^*_i(\beta)+\alpha^*_i(\cdots)=1+1+0=0$.

\cR{Let us prove minimality. Suppose, for example, that $\alpha^*_1=\alpha^*_{j_1}+\cdots+\alpha^*_{j_s}$ where $1<j_1<\dots<j_s\leq n$, $s\geq 1$.
Then $\alpha^*_1$ is defined by the set $\{\alpha_{j_1},\dots,\alpha_{j_s}\}$ if $s$ even, and 
 $\{\alpha_{j_1},\dots,\alpha_{j_s},\beta\}$ if $s$ odd, which is a contradiction. }
\qed
\end{proof}

We will say that $\alpha^*_i$ is a representative $1$-cocycle of the cohomology generator associated to the white hole $O_i$.

\begin{algorithm}[tb]
\caption{Down project cocycle}\label{alg:downproj}

%%%%%%%%%%%%%%%
\cR{For each connected component (region) of $K^{\scst H}$.}
Let $A_{k},$ $k>0,$ denote the set of edges that define a cocycle in $\bar{G}_k$
(the boundary graph in level $k$).
The \textit{down projection} of $A_k$ to  $\bar{G}_{k-1}$ is the set of edges $A_{k-1} \subseteq \bar{G}_{k-1}$ that corresponds to $A_k$ i.e. represents the same cocycle. $A_{k-1}$ is computed as $A_{k-1}=A^s_{k-1}\cup A^r_{k-1}$, 
where $A^s_{k-1}$ denotes the set of surviving edges in $\bar{G}_{k-1}$ that
correspond to $A_{k}$, and 
$A^r_{k-1}$ is a subset of removed edges in $\bar{G}_{k-1}$. The following steps show how to obtain $A^r_{k-1}$:
\begin{enumerate}
\item Consider the contraction kernels of $G_{k-1}$ (RAG)
whose vertices are labeled with $\ell$ (the region for which cocycles are computed).
The edges of each contraction kernel are oriented
toward the respective root - each edge has a unique starting vertex.

\item For each contraction kernel $T$, from the leaves of $T$ to the root, let $e$ be an edge
of $T$, $v$ its starting point, and $E_v$ the edges in the boundary of the face associated to $v$:
label $v$ with the sum
of the number of edges that are in both $A^s_{k-1}$ and
in $E_v$, plus the labels of the children nodes of $v$.

\item A removal edge of $\bar{G}_{k-1}$ is in $A^r_{k-1}$ if
the starting point of the corresponding
edge of $G_{k-1}$ is labeled with an odd number.
\end{enumerate}
%%%%%%%%%%%%%%%%
\end{algorithm}

Algorithm~\ref{alg:downproj} gives the proposed method to downproject a cocycle $\alpha^*$ from level $k$ to level $k-1$.
Informally, in the homology-generator level, there is only one face
representing the object. Based on the geometric interpretation of cocycles (Section~\ref{sec:cohom}), if we remove the edges and the face in-between, we destroy the hole. Then, there is no need to add any other edge to the cocycle to remove the hole. However, when going down in the pyramid, this face is partitioned. A connection among all the new regions is determined by the contraction kernels of the RAG. When the first partitioning occurs, the contraction kernel will contain one or two nodes corresponding to faces with one surviving cocycle edge in its boundary, and the rest of the nodes will have none. What Algorithm~\ref{alg:downproj} does is to \textit{find the unique path in the contraction kernel joining these two nodes, and take the set of boundary edges between consecutive faces as part of the new cocycle (see Prop.~\ref{prop:indep-surviving},~\ref{prop:indep-merging-order}, and their proofs)}. Lower levels will update the connections in subsections of the cocycle path. Every subsection will correspond to a partitioned region between two consecutive cocycle edges.

Consider the example in Fig.~\ref{img:cocycle}. In the homology-generator level we have $A_5$ = $\{\alpha,\beta\}$ the representative $1$-cocycle of a cohomology generator (self\cR{-}loops in Fig.~\ref{img:cocycle}.d). For down projection in  level \cR{$4$,} $A_4 =  A^s_{4}\cup A^r_{4}$. We have that $A^s_{4}$ is the surviving edges in bold of top level in Fig.~\ref{img:cocycle}.e). In this case, $A^r_{4} = \emptyset$ because  \cR{there} is no merging of foreground regions from \cR{the boundary cell complex obtained from the} top level to \cR{the homology-generator level.}

\begin{figure}[tbp]
\begin{center}

\includegraphics[width=14cm]{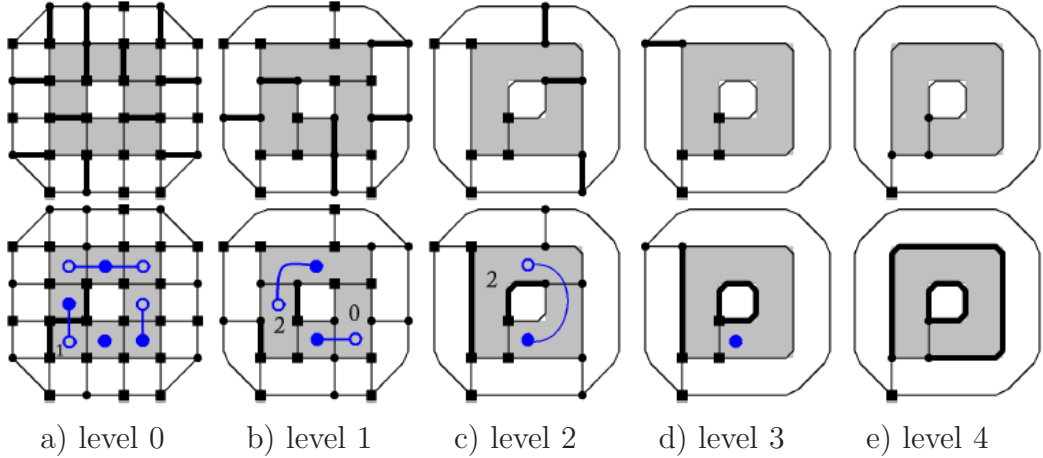}\\

a) level \cR{$0$}\hspace{.95cm} b) level \cR{$1$}\hspace{.95cm} c) level \cR{$2$}\hspace{.95cm} d) level \cR{$3$}\hspace{.95cm} e) level \cR{$4$}\\
 \end{center}
\caption{Top row: removed edges in boundary graph. Bottom row, from right to left: the down-projected cocycles in bold. Filled circles on faces, represent surviving vertices from the adjacency graph in foreground regions.}
\label{img:cocycleDP}
\end{figure}

In the example in Fig.~\ref{img:cocycleDP}, the cocycle $\alpha^*$ in level $4$  is the set of the two edges in bold (see Fig.~\ref{img:cocycleDP}, bottom row, column $d$).
The down projection from level $4$ to $3$ are the surviving edges of the cocycle in level $4$. This is because there was no contraction in the foreground region. The contractions of the adjacency graph can be seen in the top row of the figure. 

In level $2$ (Fig.~\ref{img:cocycleDP}, bottom row, column $c$), the first contraction of foreground in the adjacency graph with a single edge appears. In this case, the leaf node represents a face with an even number ($2$) of surviving cocycle edges in its boundary, which leads to not adding any other edge to the down-projected cocycle. Only in the base level (column a), one contraction kernel has a leaf node with an odd number ($1$) of surviving cocycle edges in the boundary. In this case, the corresponding edge in the boundary graph, for the edge connecting the respective node with its father, is added to the cocycle.

\begin{figure}[tbp]
\begin{center}

\includegraphics[width=14cm]{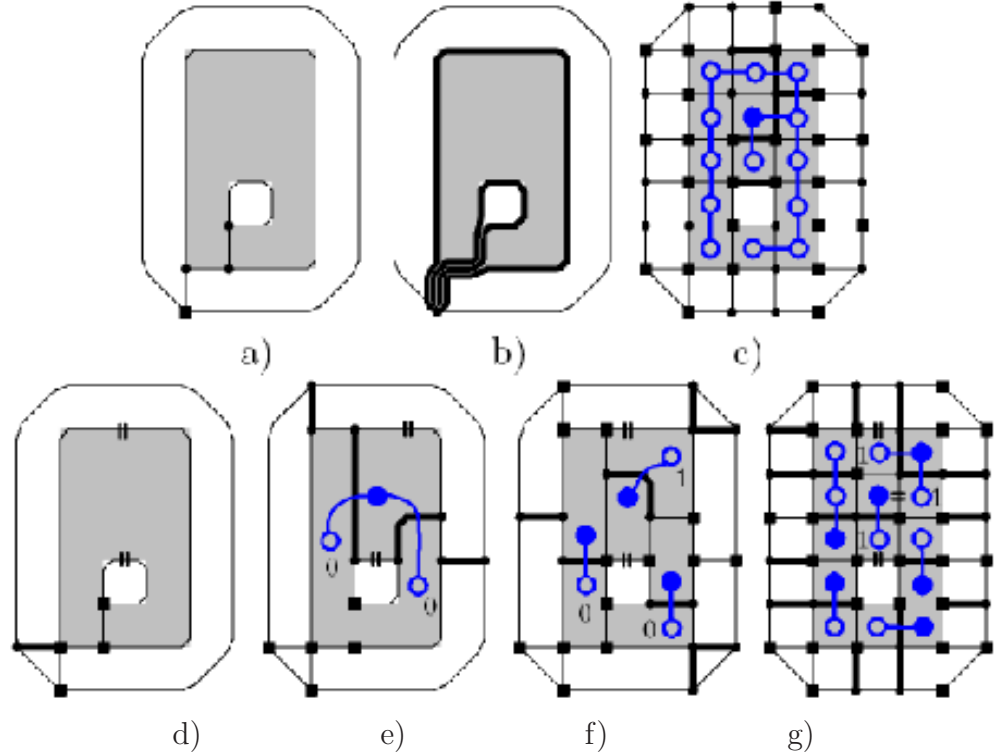} \\
d) \hspace{2.1cm} e) \hspace{2.1cm} f) \hspace{2.1cm} g)\\

 \end{center}
\vspace{-2mm}
\caption{a) \cR{Boundary cell complex obtained from the top level of the pyramid, $(G_4,\bar{G}_4)$;}
 b) \cR{homology-generator level} with cocycles edges in bold; c) ECK of the foreground in RAG, drawn over $\bar{G}_0$ with cocycle edges in bold; d) to g) shows removed edges in bold for levels from $3$ to $0$. Cocycle edges are marked with two small parallel lines.}
\label{img:sampleECK}
\end{figure} 

Any edge $\bar{G}_0$ that has survived to a higher level $k$, and was selected as part of the cocycle in $\bar{G}_k$, will belong to the \cR{down-}projected cocycle in $\bar{G}_0$. In particular, the edges in $\bar{G}_0$ that have survived to be the edges $\alpha,\beta$ of the cocycle in the homology-generator level, are going to be the \textit{entry} and \textit{exit} point of the cocycle path through the foreground region.

In Fig.~\ref{img:sampleECK}, the space between the outside boundary of the object and the hole is bigger, allowing for more possibilities for the paths of the cocycle. The cocycle path in the base is going to converge to the unique path connecting the surviving edges $\alpha,\beta$ through the ECK (see Fig.~\ref{img:sampleECK}.c).

In Fig.~\ref{img:sampleECK}.d the cocycle is made of the surviving cocycle edges from the homology-generator level in Fig.~\ref{img:sampleECK}.b~. The first partition of the foreground is connected by the \cA{contraction kernel} in Fig.~\ref{img:sampleECK}.e~. Here, one of the regions in the partition contains in its boundary the two surviving cocycle edges, so there is no path to find and no new cocycle edge to add. In Fig.~\ref{img:sampleECK}.f there is only one edge to add to the cocycle to connect the path of edges, which is identified with the leaf node with label 1. %

Notice that from level Fig.~\ref{img:sampleECK}.f to the one in  Fig.~\ref{img:sampleECK}.g also the selected `surviving' edges play a role, as edges to be removed at higher levels had to have been `surviving' ones at levels below. In this case, the edge we add to the cocycle in level $1$ (Fig.~\ref{img:sampleECK}.f), was a surviving edge in level below that was connecting two contraction kernels. Therefore, the selection of the surviving edges also determine the delineation of the down\cR{-}projected cocycle together with the contracted ones. The ECK contains the decision of which edges were contracted but also which ones were simplified, determining the unique path.%

\begin{prop}\label{downprojectionprop}
\begin{itemize}
\item[1. ] The down projection of $\alpha_i^*$ is a set of edges `blocking' the creation of the hole $O_i$, i.e., given a cycle $g$ homologous to the down projection of the cycle $\alpha_i$, the down projection of $\alpha_i^*$ contains an odd number of edges of $g$.   
\item[2. ] The down projection of $\alpha_i^*$ is always a cocycle. Moreover, the down projection of $\{\alpha_1^*,\dots,\alpha_n^*\}$ is a \cR{basis} of representative $1$-cocycles.
\end{itemize}
\end{prop}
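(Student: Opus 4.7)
The plan is to recognize the down projection of Algorithm~\ref{alg:downproj} as computing the chain map $\pi_1$ induced by the composition of all elementary integral operators along the pyramid, and then to invoke Lemma~\ref{lem:cocycle-integral-operator} together with the standard invariance of the evaluation pairing between cohomology and homology classes.

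First I would verify that iterating Algorithm~\ref{alg:downproj} from the homology-generator level $K^{\scst H}$ down to $K^0$ coincides with the cochain $\alpha_i^*\pi_1$, where $\{\phi_p\}_{p\geq 0}$ is the composition of the integral operators associated by Lemma~\ref{lem:oper-chain-homotopy} to every edge removal and edge contraction used in the pyramid construction, and $\pi_1 = id_1+\phi_0\partial_1+\partial_2\phi_1$. For a boundary edge $e\in\bar{G}_{k-1}$ three cases need to be checked: (a) if $e$ survives to level $k$, then $(c\pi_1)(e)=c(e)$, which matches the definition of $A^s_{k-1}$; (b) if $e$ is contracted in $\bar{G}_{k-1}$ (simplification), then $\pi_1(e)=0$ and $e$ is correctly omitted; (c) if $e$ is removed in $\bar{G}_{k-1}$ because two regions are merged, then $(c\pi_1)(e)$ reduces to $c(\partial_2\phi_1(e))$, which, when expanded along the oriented contraction kernel rooted at the surviving vertex of $G_{k-1}$, coincides with the leaf-to-root parity that the algorithm accumulates in its labeling step.

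Once this identification is established, part (2) is immediate: by Lemma~\ref{lem:cocycle-integral-operator}, $\alpha_i^*\pi_1$ is a $1$-cocycle of $K^0$ representing the same cohomology class as $\alpha_i^*$, and the induced map on cohomology is an isomorphism. The previous proposition gives that $\{[\alpha_1^*],\dots,[\alpha_n^*]\}$ is a basis of $H^1(K^{\scst H})$, and consequently the down projections form a basis of representative $1$-cocycles of $K^0$. For part (1), the evaluation of a cocycle on a cycle depends only on their classes. In $K^{\scst H}$ one has $\alpha_i^*(\alpha_i)=1$, since $\alpha_i$ is one of the two edges defining $\alpha_i^*$ and $\beta\neq\alpha_i$. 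Interpreting the down projection of the cycle $\alpha_i$ as $\iota_1(\alpha_i)\in C_1(K^0)$, for any cycle $g$ with $g\sim\iota_1(\alpha_i)$ in $K^0$ one obtains
\[
(\alpha_i^*\pi_1)(g) = (\alpha_i^*\pi_1)(\iota_1(\alpha_i)) = \alpha_i^*(\pi_1\iota_1(\alpha_i)) = \alpha_i^*(\alpha_i) = 1,
\]
using that $\pi\iota=id$ on $\mathrm{im}\,\pi$, a standard consequence of the chain-homotopy property. Thus the down-projected cocycle meets $g$ in an odd number of edges and blocks the hole $O_i$.

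The main obstacle I anticipate is step (c): carefully matching the algorithm's oriented-tree labeling with the algebraic expansion of $\pi_1$ across all the levels at which simplifications (contributions of $\phi_0$ at degree-$2$ vertices) and mergings (contributions of $\phi_1$ at dual-removed edges) are composed. Confirming that these contributions accumulate exactly as the leaf-to-root parity prescribed by the algorithm, and that the result is independent of the chosen spanning trees and orientations, is the technical core of the argument; once it is in place, the cohomological conclusions become essentially formal consequences of Lemma~\ref{lem:cocycle-integral-operator}.
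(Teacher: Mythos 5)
Your proposal follows essentially the same route as the paper: part (1) is proved there by exactly the computation $\alpha_i^*\pi_1(g)=\alpha_i^*\pi_1(\iota_1(\alpha_i)+\partial_2 b)=1$, using $\pi_1\iota_1=id$ and the chain-map property of $\pi$, and part (2) is likewise deduced from the integral-operator machinery (the paper cites Lemma~\ref{lem:cocycle-pyr}, the one-step consequence of Lemmas~\ref{lem:cocycle-integral-operator} and~\ref{lem:oper-chain-homotopy}, where you invoke Lemma~\ref{lem:cocycle-integral-operator} directly). The identification of Algorithm~\ref{alg:downproj} with $\alpha_i^*\pi_1$ that you flag as the technical core is left at essentially the same level of detail in the paper's own proof, so your plan matches the published argument.
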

 
\begin{proof}
\begin{itemize}
 \item[1. ]
 The down projection of $\alpha_i^*$, which is $\alpha_i^*\pi_1$, contains an odd number of edges of $g$ iff $\alpha_i^*(g)=1$. First, if $g$ is homologous to the down projection of $\alpha_i$, which is $\iota_1(\alpha_i)$, then there exists a $2$-chain $b$ in $K_0$ such that $g=\iota_1(\alpha_i)+\partial_2(b)$. Second,   $\alpha_i^*\pi_1(g)=\alpha_i^*\pi_1(\iota_1(\alpha_i)+\partial_1(b))=1$, since $\alpha_i^*\iota_1(\alpha_i)=\alpha_i^*(\alpha_i)=1$, and $\alpha_i^*\pi_1\partial_2(b)=0$ because $\alpha_i^*$ is a cocycle and $\pi_1\partial_2=\partial_1\pi_2$ (since $\{\pi_p:C_p(K^0)\to C_p(K*{\scst H})\}_{p\geq 0}$ is a chain equivalence \cite{Munkres}). So  $g$ must contain an odd number of edges of the set that defines  $\alpha_i^*$.
 \item[2. ] Proof of correctness of the down projection algorithm: it is a consequence of Lemma \ref{lem:cocycle-pyr}.
\qed
\end{itemize}
\end{proof}

Example down projections are shown in Fig.~\ref{img:PyramidDrawing}, \ref{img:cocycleDP}, and \ref{img:sampleECK}.

\subsection{Complexity}
Let $n$ be the height of the pyramid (number of levels) and \cA{$v_0, e_0$ the number of vertices, respectively edges} in the base level, with $n \approx \log v_0$ (logarithmic height).
An upper bound for the computation complexity is: $O(v_0 n)$ to build the pyramid; for each foreground component, $O(h)$ in the number of holes $h$ to choose the representative cocycles in the top level; $O(e_0  n)$ to down project \cA{each} cocycle. The overall computation complexity is then below $O(v_0 n + c (h e_0 n))$, where $c$ is the number of cocycles that are computed and down projected.

\cA{Actually not all edges are part of cocycles and not all levels have $e_0$ number of edges. When building $A_{k-1}$ one can go in linear time over the edges of $A_k$ and consider only the contraction kernels in $G_{k-1}$ for which the surviving vertices in $G_k$ correspond to one of the two faces to which an edge of $A_k$ is incident to. Then, computing $A_{k-1}$ actually takes $|A_{k}| + \sum_i{|T_{k-1}^i|}$ number of steps, where $T_{k-1}^i \in G_{k-1}$ are the contraction kernels mentioned before. Thus in practice the complexity of down projecting a cocycle is below $O(e_0 n)$.}

\begin{figure}[tbp]
\begin{center}
\includegraphics[width=14cm]{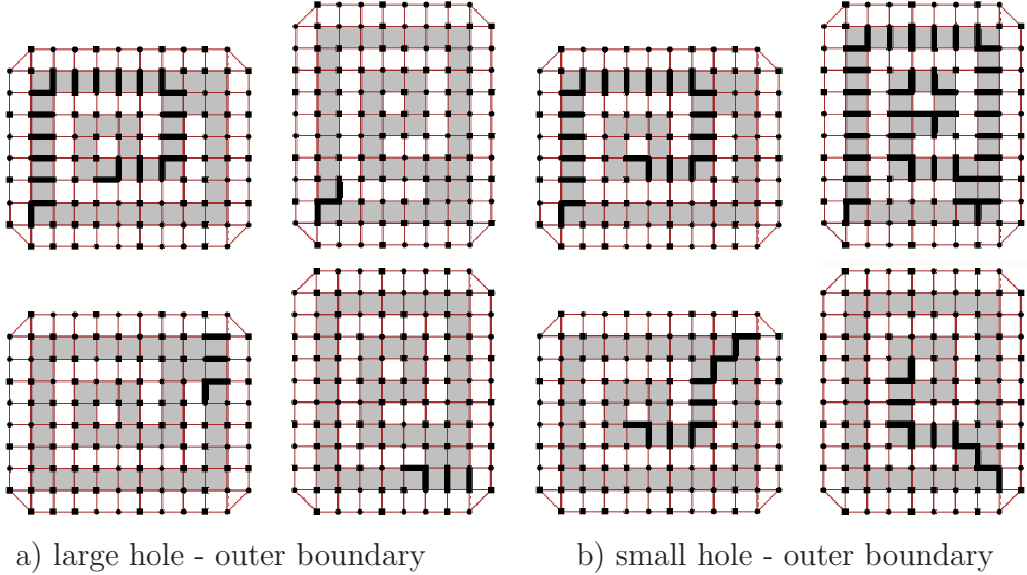}\\
a) large hole - outer boundary\hspace{2cm}b) small hole - outer boundary
\end{center}
\caption{Example showing (top) normal and (bottom) rotation invariant cocycles. The cocycles are down-projected starting with the surviving edge of the outer boundary and the surviving edge of the a) large and b) small holes.}\label{fig:rot-inv}
\end{figure}

%%%%%%%%%%%%%%%%%%%%%%%%%%%%%%%%%%%%%%%%%%%%%%%%%%%%%%%%%%%%%%%%%%%%%%%%%%%%%%%%%%%%%%%%%%%%%%%%%%%%%%%55}
\section{A First Step Towards Stable Cocycles}
\label{sec:stable-cocycles}

\cA{If topology is considered in the context of recognition, or a joint extraction of both topological and geometrical features is required, then the  location and shape of the extracted topological invariants becomes relevant and is an important way to ensure stability/repeatability. A relevant example is the work in~\cite{Dey2008} where handle and tunnel loops are made geometry aware by placing them on ``geometrically relevant'' positions.}

\cA{In this section we make a first stept towards obtaining stable cocycles and} consider invariance with respect to scanning and rotation of the object.
Because we start with a pixel grid and the \cR{$4$} neighborhood,\cA{only} rotations with multiple of $90$ degrees \cA{produce identical discretizations allowing for identical cocycles}.

The following properties are required for Prop.~\ref{prop:downproj-dependencies} which \cA{gives} the parts of the pyramid that the computed cocycles depend on.

\begin{prop}
\label{prop:only-removal}
The down-projected cocycles contain only removal edges in the boundary graphs, corresponding to edges in the ECK of the top vertex representing the object in $G_n$, and the two edges $\alpha, \beta$ that have survived to $\bar{G}_n$ and where selected as the cocycle $\alpha^*=\{\alpha, \beta\}$.
\end{prop}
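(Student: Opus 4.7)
The approach I would take is a downward induction on the pyramid level $k$, from $k=n$ down to $k=0$. The invariant I propose to maintain is that at every level $A_k$ contains only (i) the images in $\bar{G}_k$ of the two chosen self-loops $\alpha,\beta$ that survive to $\bar{G}_n$, and (ii) removal edges of $\bar{G}_j$ for some $j$ with $k\leq j<n$, each inherited via the surviving-edge mechanism into $\bar{G}_k$, whose dual edges in $G_j$ lie in the ECK of the top vertex of $G_n$ representing the object.

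For the base case at $k=n$, the chosen cocycle in the homology-generator level $K^{\scst H}$ is $\{\alpha,\beta\}$, and both are self-loops that have survived from $\bar{G}_n$. The passage from $K^{\scst H}$ back to $\bar{G}_n$ only undoes the contractions of the spanning-tree edges of $\bar{G}_n$, none of which equals $\alpha$ or $\beta$; invoking Lemma~\ref{lem:cocycle-integral-operator} with $\pi_1$ for the composition of those spanning-tree contractions then shows that $A_n=\{\alpha,\beta\}$, so the invariant holds trivially.

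For the inductive step, assume the invariant at level $k$ and run Algorithm~\ref{alg:downproj} to descend to $\bar{G}_{k-1}$. Then $A_{k-1}=A^s_{k-1}\cup A^r_{k-1}$. The surviving part $A^s_{k-1}$ is just $A_k$ viewed one level lower and satisfies the invariant by hypothesis. The added part $A^r_{k-1}$ is drawn only from contraction kernels of the RAG $G_{k-1}$ whose vertices carry the object label $\ell$. Each such contraction kernel contracts into a surviving vertex of $G_k$ that is again labeled $\ell$; iterating this identification up the pyramid exhibits every contributing contraction kernel as a subtree of the ECK of the single top vertex of $G_n$ representing the object. Hence the edges of $A^r_{k-1}$ are removal edges of $\bar{G}_{k-1}$ whose duals lie in the claimed ECK, preserving the invariant.

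The main obstacle I anticipate is the bridging step between $K^{\scst H}$ and $\bar{G}_n$: the spanning-tree contraction used to form $K^{\scst H}$ is not a step of Algorithm~\ref{alg:buildpyr} and is therefore not directly covered by the description of Algorithm~\ref{alg:downproj}. One has to argue carefully, via Lemma~\ref{lem:cocycle-integral-operator} applied to the composition of the integral operators associated with the spanning-tree contractions, that no additional edges get attached at this initial step precisely because $\alpha$ and $\beta$ themselves are not contracted. Once this bridging is in place, the rest reduces to the clean induction above, driven by the definition of the ECK and by the labeling rule of Algorithm~\ref{alg:downproj}.
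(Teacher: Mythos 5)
Your proposal is correct and is essentially the paper's own argument in expanded form: the paper simply observes that Algorithm~\ref{alg:downproj} starts from $\{\alpha,\beta\}$ in $\bar{G}_n$ and at each level $k=n-1,\dots,0$ adds only removal edges of $\bar{G}_k$, i.e.\ edges contracted in $G_k$ to merge object-labeled regions and hence lying in the ECK of the top vertex, which is exactly the invariant your downward induction maintains. The ``bridging step'' you worry about is handled implicitly (and trivially) in the paper: building $K^{\scst H}$ contracts a spanning tree of $\bar{G}_n$, which is edge \emph{removal} in $G_n$, so there are no foreground contraction kernels at that step and $A^r_n=\emptyset$ without any appeal to Lemma~\ref{lem:cocycle-integral-operator}.
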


\begin{proof}
Algorithm~\ref{alg:downproj} starts with two edges $\{\alpha, \beta\}$ in $\bar{G}_n$, and for each level $k = n-1, \dots, 0$ it adds only removal edges from $\bar{G}_k$ i.e. edges that where contracted in $G_k$ to merge neighboring regions belonging to the object.
\qed
\end{proof}

\begin{prop}
\label{prop:indep-surviving}
The result of down projection (Algorithm~\ref{alg:downproj}) does not depend on the selected surviving vertices in $G_k$.
\end{prop}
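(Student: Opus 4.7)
The plan is to fix a single contraction kernel $T\subseteq G_{k-1}$ and show that the subset of its edges added by Algorithm~\ref{alg:downproj} to $A^r_{k-1}$ does not depend on the choice of root of $T$ (i.e., the surviving vertex in $G_k$ corresponding to $T$). Since distinct contraction kernels are processed independently, and since $A^s_{k-1}$ is determined entirely by which edges of $\bar{G}_{k-1}$ survive to $\bar{G}_k$, this will establish root-independence of the step from level $k$ to level $k-1$; applying it at every level completes the argument.

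My first step is to reinterpret the labels computed in step~2 of the algorithm as parities of subtrees. For each $v\in V(T)$, let $S(v)=|A^s_{k-1}\cap E_v|$, where $E_v$ is the set of edges of $\bar{G}_{k-1}$ bounding the face associated with $v$, and set $p(v)=S(v)\bmod 2$. A direct leaves-to-root induction on the recurrence defining the labels gives that the label attached to $v$ is congruent mod~$2$ to $\sum_{w\in T_v} p(w)$, where $T_v$ denotes the subtree of $T$ rooted at $v$, i.e.\ the component of $T$ containing $v$ after the edge from $v$ to its parent is removed. Consequently, the edge joining $v$ to its parent is added to $A^r_{k-1}$ exactly when this parity is odd.

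The main step is the identity
$$\sum_{v\in V(T)} p(v) \equiv 0 \pmod{2},$$
which I would prove by a double-counting argument. Each edge of $\bar{G}_{k-1}$ is incident to exactly two faces (counted with multiplicity two for a self-loop), so an edge of $A^s_{k-1}$ contributes an even amount to $\sum_v S(v)$ unless exactly one of its incident faces lies in $T$, in which case it contributes $1$. Edges of $A^s_{k-1}$ with exactly one incident face in $T$ are precisely those surviving cocycle edges that lie on the boundary in $\bar{G}_k$ of the merged face $F$ obtained by contracting $T$. Because $A_k$ is a cocycle in $\bar{G}_k$, $\delta^1(A_k)(F)=0$, so the total number of $A_k$-edges on $\partial F$ is even; this yields the identity.

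Given this identity, the two components of $T\setminus\{\epsilon\}$ have equal $p$-parities for every edge $\epsilon\in T$, so the criterion ``the subtree of the child endpoint of $\epsilon$ has odd $p$-parity'' is symmetric in the two endpoints of $\epsilon$ and therefore independent of which endpoint is declared the root-ward side. This gives root-independence for a single kernel; iteration across all kernels and all levels yields the proposition. The main obstacle is the global parity identity above; beyond it, only bookkeeping remains, namely checking that self-loops contribute an even amount to $\sum_v S(v)$ and that the simplification steps of Algorithm~\ref{alg:pyramid} introduce no new cocycle edges, so the double count is not disturbed.
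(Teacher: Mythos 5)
Your proof is correct and takes essentially the same approach as the paper's: the paper likewise reduces the claim to the parity of $q(T)=\sum_{v\in T}|A^s_{k-1}\cap E_v|$ on the two components of $T$ obtained by deleting a cut edge, deriving the evenness of $q(T)$ from the fact that the cocycle $A_k$ contains an even number of edges from the boundary of the merged face in $\bar{G}_k$. Your double-counting argument for the global parity identity is simply a more explicit version of the paper's one-line appeal to that cocycle condition, and the final symmetry-of-the-criterion step is identical.
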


\begin{proof}
Consider the function $q : T \subseteq G_{k-1} \rightarrow \N$, $q(T) = \sum_{v \in T} |A^s_{k-1} \cap E_v|$ (see Alg.\ref{alg:downproj} for the used notation).
Every cocycle $A_k$ has an even number of edges from the boundary of any face in $\bar{G}_k$ (Section~\ref{sec:cohom}).  Then $q(T)$ is also even and the number of vertices $v$ for which $|A^s_{k-1} \cap E_v|$ is odd, is even. For any edge $e \in T = (V, E)$ consider the two connected components (trees) $T_1, T_2$ of the subgraph $(V, E\setminus\{e\})$ ($e$ is a cut edge of $T$ because $T$ is a tree). The removal edge of $\bar{G}_{k-1}$ corresponding to the edge $e$ is added to the cocycle if $q(T_1)$ and $q(T_2)$ are even, which is independent of the originally chosen surviving vertex, the root of $T$.
\qed
\end{proof}

\begin{figure}[t!]
\begin{center}
\includegraphics[width=14cm]{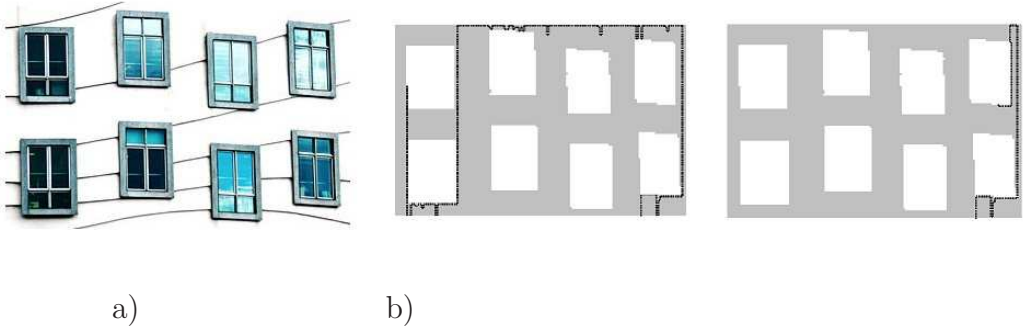}
$\mbox{ }$\\
 a) \hspace{3cm} b) \hspace{3cm} \hspace{3cm}$\mbox{ }$ \\
\end{center}
\caption{a) original image; b) in bold, the paths in the RAG $G_0$ associated to the down-projected cocycles related to the holes representing the top-left and top-right windows.}\label{fig:windows}
\end{figure}
\begin{prop}
\label{prop:indep-merging-order}
A down\cR{-}projected cocycle does not depend on the order in which edges are removed in the boundary graphs (region merging).
\end{prop}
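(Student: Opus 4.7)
The plan is to read the proposition in two levels of strength and address them separately. In its weakest reading, it asserts invariance under the order in which Algorithm 1 picks edges inside one iteration of its outer loop. Since the inner \textbf{for} loop only accumulates selected edges into the set $T$, and Line 5.i contracts all of $T$ as a single batch operation, the resulting pair $(G_{k+1},\bar{G}_{k+1})$ depends on $T$ as a set, not on the order of insertion. Hence the whole pyramid is unchanged by any intra-level reordering, and Algorithm 2 --- which is a deterministic function of the pyramid --- produces the same down-projected cocycle. This handles the non-deterministic scheduling of region-merging contractions within a level.

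For the stronger (and more substantive) reading, in which different runs may select different sets $T$ at each level and so produce genuinely distinct intermediate pyramids, the plan is to combine Proposition~\ref{prop:only-removal} with Proposition~\ref{prop:indep-surviving}. By Proposition~\ref{prop:only-removal}, the down-projected cocycle in $\bar{G}_0$ consists only of the two surviving edges $\alpha,\beta$ (which depend on the initial pixel labels and the cocycle choice in the top level, both order-independent data) together with edges belonging to the ECK of the top vertex representing the foreground. By Proposition~\ref{prop:indep-surviving}, the contribution of each level to the cocycle is characterised by the parity-cut condition ``$q(T_1)$ and $q(T_2)$ both odd'' on the two subtrees obtained by deleting an edge from a contraction kernel, and this characterisation does not depend on the rooting.

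I would then proceed by downward induction on levels. The top-level data $A_n=\{\alpha,\beta\}$ is order-independent by construction. Assuming $A_k$ is order-independent, the transition $A_k \mapsto A_{k-1}$ is performed level-locally by the parity-cut rule, so it suffices to show that the cocycle's parity-cut condition in $\bar{G}_{k-1}$ is itself order-invariant. The key observation is that Lemma~\ref{lem:oper-chain-homotopy} and Lemma~\ref{lem:cocycle-integral-operator} identify the pyramid construction with the composition of chain-homotopy-satisfying integral operators; any two orderings furnish two such composite operators $\phi,\phi'$, both of which induce chain equivalences to the same $C_p(K^{\scst H})$. Via $\pi$ and $\pi'$ they carry the fixed top cocycle $\alpha^{*}$ to cohomologous representatives in $C_1(K^0)$. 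To upgrade from ``cohomologous'' to ``equal as edge sets'' I would use the parity-cut criterion from Proposition~\ref{prop:indep-surviving}: an edge $e$ in $\bar{G}_0$ is added to the cocycle exactly when removing the corresponding ECK edge disconnects the spanning tree into two components each crossing the already-determined $A_k^{s}$ an odd number of times, a purely combinatorial condition on the foreground partition and the fixed top cocycle.

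The main obstacle is this last upgrade step: the parity-cut condition is phrased against a specific spanning tree (the current ECK), and distinct orderings can realise genuinely different spanning trees of the foreground. To close the gap I would argue that any two admissible spanning trees are connected by a sequence of elementary exchanges (the matroid exchange property for spanning trees), and show that a single exchange preserves the parity-cut classification of every base-level edge, using that both trees span the same vertex set and that $\alpha,\beta$ are fixed. This reduces the global invariance to a one-step local invariance check, which follows from elementary parity arithmetic applied to the two subtrees produced by the exchange.
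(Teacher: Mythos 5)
Your first reading (invariance under the order in which edges are accumulated into $T$ within one level) is correct but essentially vacuous, since $T$ is contracted as a set. The substance of the proposition lies in the case you call the ``stronger reading,'' and there your plan goes wrong in two ways. First, you have changed the statement: the proposition asserts invariance under the \emph{order} in which a given collection of region-merging contractions is performed (i.e.\ how they are grouped into levels and sequenced), not invariance under a different \emph{choice} of which edges to contract. If different runs select genuinely different spanning trees of the foreground, the down-projected cocycle does in general change as an edge set --- this is exactly why Prop.~\ref{prop:downproj-dependencies} lists the ECK among the data the cocycle depends on, and why Section~6.1 must construct a canonical, rotation-invariant ECK to get stable cocycles. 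Second, and consequently, your closing step cannot be made to work: by Prop.~\ref{prop:indep-surviving} an ECK edge enters the cocycle precisely when it lies on the unique tree path joining the two faces carrying $\alpha$ and $\beta$, and a single matroid exchange on a spanning tree generally reroutes that unique path. The two resulting cocycles are cohomologous (as your chain-equivalence argument via Lemma~\ref{lem:cocycle-integral-operator} correctly shows), but they need not be equal as edge sets, which is what the proposition claims.

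The paper's proof is much shorter and stays within the correct reading: Algorithm~\ref{alg:downproj} returns $\alpha$ and $\beta$ together with the edges of $\bar{G}_0$ corresponding to the unique path, in the ECK of the top foreground vertex, between the two faces having $\alpha$ and $\beta$ on their boundary; since the ECK obtained by composing the contraction kernels does not depend on the order of the intermediate contraction steps (a known property of equivalent contraction kernels), and unique paths in a fixed tree are order-free, the output is order-independent. To repair your argument, drop the matroid-exchange machinery, fix the set of contracted edges, and argue only that regrouping those contractions into different levels yields the same composite ECK and hence the same tree path.
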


\begin{proof}
Consider the proof of Prop.~\ref{prop:indep-surviving}. What Algorithm~\ref{alg:downproj} does is to select additional edges from the contraction kernel $T$, to connect the vertices $v$ with odd $|A^s_{k-1} \cap E_v|$. In a tree, there is a unique path connecting any two vertices. Denote by $K$ the ECK of the vertex in $G_n$ corresponding to the object. Algorithm~\ref{alg:downproj} returns $\alpha$ and $\beta$, plus the set of edges of $\bar{G}_0$ corresponding to the path in $K$ that connects the two vertices whose corresponding faces in $\bar{G}_0$ have $\alpha$ and $\beta$ in their boundary. The ECK of a vertex does not depend on the order of the intermediate steps~\cite{Kropatsch97}.
\qed
\end{proof}

The following property is an immediate result of Properties~\ref{prop:only-removal}, \ref{prop:indep-surviving}, and~\ref{prop:indep-merging-order}.

\begin{prop}
\label{prop:downproj-dependencies}
The cocycles computed by Algorithm~\ref{alg:downproj} depend only on the cocycle $\{\alpha,\beta\}$ chosen in the top level\cR{,} $(G_n,\bar{G}_n)$, and on the ECK of the vertex in $G_n$ corresponding to the face describing the object in $\bar{G}_n$.
\end{prop}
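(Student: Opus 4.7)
The plan is to assemble this proposition as an immediate consequence of the three preceding propositions, spelling out why together they cover every possible source of variation in the output of Algorithm~\ref{alg:downproj}. First I would enumerate what data could in principle influence the down-projected cocycle: (i) the initial choice of $\{\alpha,\beta\}$ in $(G_n,\bar{G}_n)$; (ii) which edges are contracted at each intermediate level, together with the order in which the intermediate region merging is carried out; (iii) which vertex of each contraction kernel is chosen as the surviving vertex; and (iv) which edges of the boundary graph happen to survive versus be removed at each simplification step. The goal is then to show that (ii)--(iv) collapse into the single combinatorial datum of the ECK of the vertex in $G_n$ that represents the object.

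Next I would invoke Proposition~\ref{prop:only-removal} to restrict attention: it guarantees that the output consists of $\{\alpha,\beta\}$ together with a subset of removal edges in the boundary graphs whose corresponding edges in the adjacency graphs lie inside the ECK of the object-vertex in $G_n$. So nothing outside this ECK, and nothing outside the initial choice in the top level, can ever appear in the cocycle. This already identifies the two pieces of data mentioned in the statement as upper bounds on what the output can depend on; what remains is to rule out that the specific sequence of operations producing the ECK affects which subset of its edges is selected.

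Then I would combine Propositions~\ref{prop:indep-surviving} and~\ref{prop:indep-merging-order} to eliminate the remaining ambiguities. Proposition~\ref{prop:indep-surviving} says that, level by level, the labeling procedure used by Algorithm~\ref{alg:downproj} produces the same set $A^r_{k-1}$ regardless of which root is chosen for each contraction kernel, because the parity argument that decides whether a tree edge is added depends only on the two parities $q(T_1), q(T_2)$ after cutting. Proposition~\ref{prop:indep-merging-order} then says that the additional edges selected inside any contraction kernel are exactly those on the unique tree-path joining the odd-parity vertices, and the ECK itself is invariant under the order in which the merges are scheduled (citing~\cite{Kropatsch97}). So once $\{\alpha,\beta\}$ and the ECK are fixed, both the set of candidate removal edges and the selection rule for which of them to include are determined, and the conclusion follows.

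The main obstacle, if any, is conceptual rather than technical: one has to verify that the selection rule of Algorithm~\ref{alg:downproj} is genuinely a function of the ECK as an abstract rooted tree (together with its embedding that determines $E_v$ for each vertex), and not of auxiliary data attached during the construction. Proposition~\ref{prop:indep-surviving} dispenses with root-dependence and Proposition~\ref{prop:indep-merging-order} with order-dependence, so their conjunction is precisely what is needed. I would close the proof with one sentence stating that Properties~\ref{prop:only-removal}, \ref{prop:indep-surviving} and~\ref{prop:indep-merging-order} together exhaust the potential dependencies, yielding the claim.
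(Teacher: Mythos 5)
Your proposal matches the paper exactly: the paper gives no separate proof, stating only that the proposition is ``an immediate result'' of Propositions~\ref{prop:only-removal}, \ref{prop:indep-surviving}, and~\ref{prop:indep-merging-order}, which is precisely the decomposition you carry out. Your elaboration of why these three together exhaust the possible dependencies is a faithful (and slightly more explicit) version of the intended argument.
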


The following property results from Prop.~\ref{prop:downproj-dependencies} and motivates the modification proposed in the rest of this section.

\begin{prop}
\label{prop:scan-invariance}
If the ECK of the vertex representing the object in the adjacency graph of the homology\cR{-}generator level, and the edges that survive to be in the boundary of the corresponding face in the boundary graph, are scanning and rotation invariant, we will obtain scanning and rotation invariant cocycles.
\end{prop}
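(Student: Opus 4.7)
The plan is essentially to reduce Proposition~\ref{prop:scan-invariance} to the dependency statement in Proposition~\ref{prop:downproj-dependencies}. The first step is to observe that, by Proposition~\ref{prop:downproj-dependencies}, the output of Algorithm~\ref{alg:downproj} is a function only of (i) the pair $\{\alpha,\beta\}$ chosen in the top level and (ii) the ECK in $G_n$ of the vertex representing the object. If I can express both of these quantities purely in terms of the data assumed invariant in the hypothesis, then invariance of the output is immediate.

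Next I would handle (ii) directly: by hypothesis the ECK of the relevant vertex at the homology-generator level is scanning and rotation invariant. Since the homology-generator level is obtained from the top level $(G_n,\bar{G}_n)$ by contracting a spanning tree in $\bar{G}_n$, the ECK of the object vertex there differs from the ECK of the object vertex in $G_n$ only by the deterministic spanning-tree contraction; in particular, invariance of the former transfers to invariance of the latter. So item (ii) is taken care of.

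For (i), I would argue as follows. The self-loops $\alpha_1,\dots,\alpha_n$ at the homology-generator level are in one-to-one correspondence with the holes $O_1,\dots,O_n$ of the object, and the edge $\beta$ is a self-loop in the boundary of the face~$f$ representing the object. Both $\alpha_i$ and $\beta$ are, in the boundary graph, edges that have survived from $\bar{G}_0$ up to the homology-generator level and lie in the boundary of the face of the object. Hence they are elements of the ``edges that survive to be in the boundary of the corresponding face,'' which is assumed scanning and rotation invariant by hypothesis. So the choice of $\{\alpha,\beta\}$ associated to the hole $O_i$ is itself invariant under these transformations (the assignment hole $\mapsto$ surviving self-loop is canonical once the set of surviving boundary edges is fixed).

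Putting the two pieces together: both inputs on which the algorithm's output provably depends are invariant under scanning and rotations by multiples of $90^\circ$, so the down-projected cocycles are invariant as well. The only delicate point I anticipate is the bookkeeping in (i), namely making sure that once the set of surviving self-loops around the object face is given, the pairing $\{\alpha_i,\beta\}$ used in Section~\ref{sec:computation} is uniquely determined and not a matter of a scan-dependent tie-breaking choice; this is where the rest of Section~\ref{sec:stable-cocycles} will have to supply a canonical selection rule, and the proof of Proposition~\ref{prop:scan-invariance} should explicitly cite that rule rather than silently assume it.
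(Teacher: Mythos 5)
Your proposal is correct and follows essentially the same route as the paper, which offers no explicit proof at all but simply states that the proposition ``results from Prop.~\ref{prop:downproj-dependencies}''; your reduction to that dependency statement is exactly the intended argument. The extra bookkeeping you supply --- transferring invariance of the ECK between the homology-generator level and $G_n$, and flagging that the pairing $\{\alpha_i,\beta\}$ needs a canonical, scan-independent selection rule (which the remainder of Section~\ref{sec:stable-cocycles} indeed provides) --- is a faithful and somewhat more careful elaboration of what the paper leaves implicit.
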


In the following we will consider the necessary additions to the pyramid building process, to ensure that computed cocycles do not depend on the scanning and rotation of the object. We follow Prop.~\ref{prop:scan-invariance} and consider the ECK and the boundary edges of the top level.

\begin{figure}[t!]
\begin{center}
\includegraphics[width=14cm]{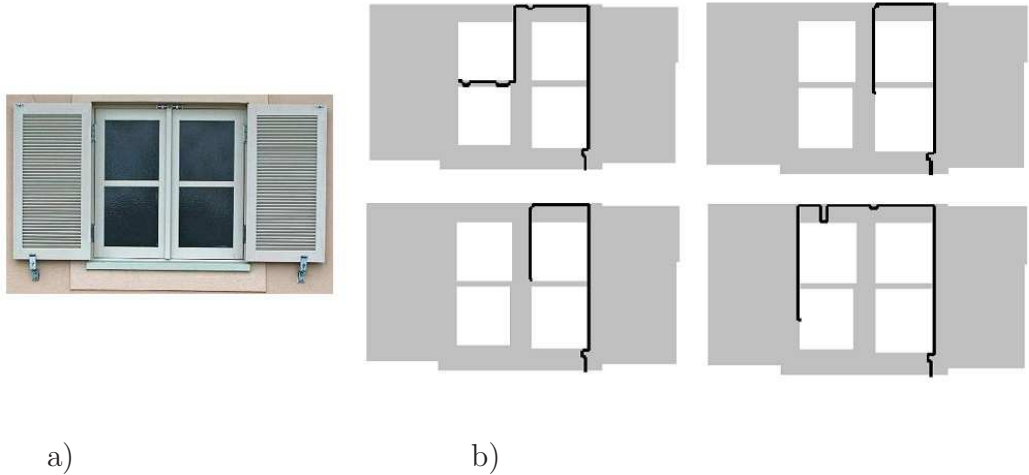}
$\mbox{ }$\\
a) \hspace{5cm} b) \hspace{3cm} \hspace{3cm}$\mbox{ }$ \\
\end{center}
\caption{a) original image; b) in bold, the paths in the RAG $G_0$ associated to the down-projected cocycles. 
}
\label{fig:window}
\end{figure}

\subsection{Invariant ECK and surviving edges}

As the edges to be removed in $\bar{G}_k$ are `locally' chosen, in a binary image like the ones used to represent our objects, there is no local structure and a random or scanning/orientation dependent direction is taken (Line~\ref{lin:boru} of Algorithm~\ref{alg:pyramid}).

To `add structure' and create an ordering for selecting edges to be removed and contracted:
\begin{enumerate}
 \item Compute a \textit{spanning tree} of the subgraph $O \subset G_0$ corresponding to our object in the base level. Mark the \textit{edges of $\bar{G}_0$ corresponding to the edges of the spanning tree to be the removed} ones.
 \item Create a \textit{strict ordering between any two edges}. This ordering is used to select surviving edges during simplification, and thus controls the choice of edges in the homology\cR{-}generator level.
\end{enumerate}

\paragraph{The tree} Given the graph $O$ corresponding to our object, and a vertex $s \in O$, we define $d(v) : O \rightarrow \N$ to be the number of edges of the shortest path connecting $v$ and $s$ in $O$ i.e. the geodesic distance between the two pixels corresponding to $v$ and $s$, using the 4 neighborhood.
Note that a vertex $s$ can be obtained in a rotation invariant manner for example by using an automatic shape orientation method~\cite{ZunicKF06} and then selecting the top, left-most vertex.

\paragraph{Stable ECK (in $G_0$)} Every vertex $v \in G_0$, $v \neq s$, labels the edge $(v,v') \in G_0$ with $d(v') = d(v)-1$ as `to contract'. If $v$ has more than one neighbor $v'$ with $d(v') = d(v)-1$ the neighbor that minimizes the angle $\widehat{SVV'}$, and in case of angle equality, the one that has a clockwise orientation of $SVV'$ is chosen. $S$, $V$, and $V'$ are the points in $\Z^2$ corresponding to the centers of the pixels represented by $s$, $v$, and $v'$. The edges in $\bar{G}_0$ corresponding to the edges of $G_0$ labeled as `to contract', are marked as `to remove'.

\paragraph{Region boundary simplification (in $\bar{G}_k$)} In Line~\ref{lin:contract-and-simplify}.iii) of Algorithm~\ref{alg:pyramid}, from any chain of edges bounded by at least one vertex of degree 2, one edge will survive and all others will be contracted. To choose to surviving edge, assign to each edge $e \in \bar{G}_0$ bounding a cell of the object, the value $f(e) = min\{d(v_1),d(v_2)\}$ where $v_1$, $v_2$ are the vertices of $G_0$ corresponding to the two faces of $\bar{G}_0$ to which $e$ is incident to. Faces not part of the object are ignored.
When choosing the edge to survive i.e. not contract, the edges are sorted using the following (transitive) relation between any two edges $e$ and $e'$:
\begin{itemize}
 \item $f(e)$ vs. $f(e')$;
 \item if $f(e)=f(e')$ then use the orientation of $c(e)\,S\,c(e')$ vs. the orientation of $c(e')\,S\,c(e)$, where $c(e),c(e') \in \R^2$ are the centers of the edges $e$ and $e'$, and $S \in \Z^2$ is the center of the pixel used to define the rotation invariant tree;
 \item if $c(e)\,S\,c(e')$ are collinear, the Euclidean distance between $c(e)$ and $S$ vs. the Euclidean distance between $c(e')$ and $S$.
\end{itemize}

\paragraph{Homology\cR{-}generator level} When building the homology\cR{-}generator level, all edges of $\bar{G}_{n-1}$ bounding the face of the object are sorted based on the criteria above. Edges are selected in inverse order and used to create the spanning tree to be contracted. Edges not bounding the face corresponding to the object are added in random order.

Fig.~\ref{fig:rot-inv} shows an example object and its computed cocycles with and without the rotation invariant pyramid.
Fig.~\ref{fig:windows} and ~\ref{fig:window} show the paths in the RAG associated to the down-projected cocycles of the test images in \cite{Peltier09a}. Fig.~\ref{fig:camiseta}  is another example showing the path in the RAG $G_0$ associated to the down-projected cocycle related to the hole associated to the ball. Finally, Fig~\ref{fig:iniesta} 
shows the down-projected cocycles computed on a image from the 2010 World Cup final in South Africa.

\begin{figure}[t!]
\begin{center}
\includegraphics[width=14cm]{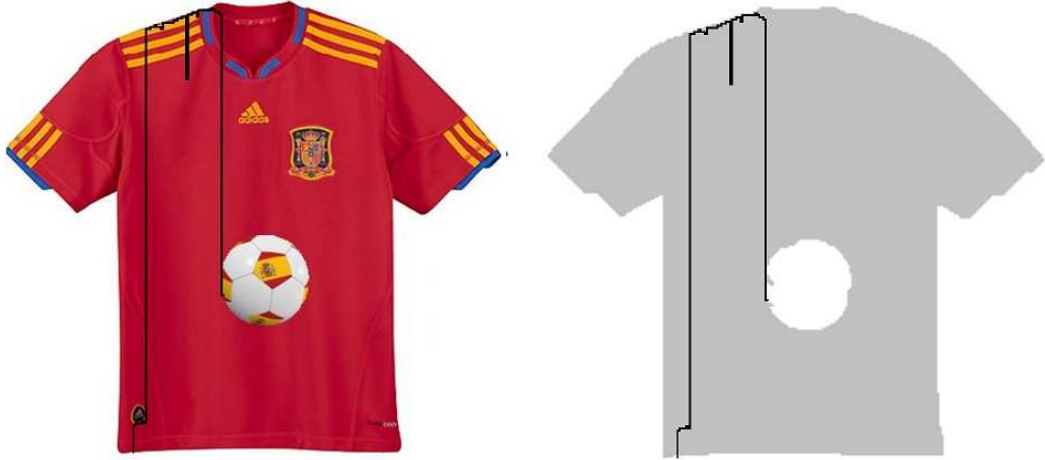}
\end{center}
\caption{In bold, the path in the RAG $G_0$ associated to the down-projected cocycle.}
\label{fig:camiseta}
\end{figure}

\subsection{Discussion}
Besides stability with respect to geometric transformations, one could consider additional criteria like the minimality of the obtained cocycles w.r.t. additional measures, like for example the number of edges of each cocycle, the sum of the number of edges of a basis of representative cocycles, the length of the path in $\R^2$ passing through the support squares of the pixels having at least one cocycle in their boundary\footnote{Would give Euclidean rotation robust cocycles.}, etc.

It has been shown for homology generators~\cite{ChenF10} that in general the problem of computing minimal representative cycles is NP-hard. Nevertheless, in certain cases, like computing $(n-1)$-cycles for $n$ dimensional objects, finding minimal cycles is not NP-hard.
For cocycles such a study does not exist yet, but considering the relation between homology and cohomology, similar results can be expected. In the case of 2D objects, the problem of finding cocycles with minimal number of edges can be related to the problem of finding in the RAG shortest paths that connect vertices adjacent to different holes  -- this problem can be solved in $n \log n$ if using Dijkstra's algorithm~\cite{Atallah98a}.

Real life objects are typically obtained by using scanning devices of different type: 3D scanners, video cameras, CT, MRI, etc. One common issue in all these cases is the presence of noise and being robust w.r.t. the possible deformations of a real object. A possible solution could be to define a robust basis of cocycles based on a function like the eccentricity transform~\cite{Ion08a} which is known to be robust w.r.t. noise and deformations. \cA{The eccentricity transform associates to each point of a shape the geodesic distance to the point furthest away. For a given starting point the geodesic distance function defines behind holes a set of points called the \textit{cut locus} which can be reached in the same distance on multiple paths (going on both sides of the hole). In many cases, cutting a shape along these sets can produce a shape with less holes, which gives the same geodesic distance function for the same starting point. The eccentricity transform can be interpreted as the maximum over multiple geodesic distance propagations initiated at each point of the shape. As cocycles can be seen as `cuts into the object' that `kill' a hole, a cut which has the minimum effect on the eccentricity transform, could provide an avenue for selecting robust cocycles.}

\cA{Finding associations between concepts in cohomology and graph theory will open the door for applying existing efficient algorithms (e.g. shortest path). The following lemma can be seen as a first step in this direction.
\begin{lemma} \label{holeoutside} Any set of foreground edges in the boundary graph $\bar{G}_0$, associated to a path in the RAG $G_0$, connecting a hole $O_i$ of the object with the (outside) background face, is a representative 1-cocycle 
cohomologous to the down projection of the $1$-cocycle $\alpha^*_i$. It blocks any generator that would surround the hole.
\\
In other words, consider the down projection~\cite{Peltier09a} of $\alpha_i$ and $\beta$ in $\bar{G}_0$: the $1$-cycles $\iota_1(\alpha_i)=a$ and $\iota_1 (\beta)=b$, respectively. Take any edge $e_a \in a$ and $e_b \in b$. Let $f_a$, $f_b$ be faces of  $K^0$, the boundary cell complex associated to the foreground in $G_0$ having $e_a$, respectively $e_b$, in their boundary. Let $v_0,v_1,\dots,v_n$ be a simple path of vertices in $G_0$ s.t. all vertices are labeled as foreground. $v_0$ is the vertex associated to $f_a$, and $v_n$ to $f_b$.
Consider the set of edges $c = \{e_0,\dots,e_{n+1}\}$ of $\bar{G}_0$, where $e_0=e_a$, $e_{n+1}=e_b$, and  $e_{\ell}$, $\ell = 1\dots n$, is the common edge of the regions in $\bar{G}_0$ associated with the vertices $v_{i-1}$ and $v_i$. $c$ defines a $1$-cocycle cohomologous to the down projection of the $1$-cocycle $\alpha^*_i$.
\end{lemma}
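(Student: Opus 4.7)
The plan is to split the statement into three claims: (i) $c$ is a $1$-cocycle of $K^0$; (ii) $[c] = [\alpha_i^*\pi_1]$ in $H^1(K^0)$, i.e. $c$ is cohomologous to the down projection of $\alpha_i^*$; and (iii) the blocking property, namely $c(g) = 1$ for every $1$-cycle $g$ homologous to $\iota_1(\alpha_i)$. Once (i) and the evaluation $c(\iota_1(\alpha_i))=1$ (obtained in the course of (ii)) are in hand, (iii) follows instantly: writing $g = \iota_1(\alpha_i) + \partial_2 b$ gives $c(g) = c(\iota_1(\alpha_i)) + \delta^1(c)(b) = 1 + 0 = 1$, exactly as in part 1 of Proposition~\ref{downprojectionprop}.

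For (i), I would verify $\delta^1(c)=0$ by computing $|c \cap \partial f|$ face by face in $K^0$. Let $F_\ell$ be the face of $K^0$ associated to the path vertex $v_\ell$, so $F_0 = f_a$ and $F_n = f_b$. Each intermediate face $F_\ell$ ($1 \le \ell \le n-1$) meets $c$ exactly in $\{e_\ell, e_{\ell+1}\}$; the endpoint faces $F_0$ and $F_n$ meet $c$ in $\{e_a, e_1\}$ and $\{e_n, e_b\}$ respectively; every other face of $K^0$ meets $c$ in the empty set. Because $v_0, \dots, v_n$ are distinct and $e_a, e_b$ lie on boundaries of background faces (the hole $O_i$ and the outer background), they are distinct from the $e_\ell$'s (which sit between two foreground regions), so each count is a genuine even number and $c$ is a cocycle.

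For (ii), the strategy is to invoke $\mathbb{Z}/2$ homology/cohomology duality. By Lemmas~\ref{lem:cocycle-integral-operator} and~\ref{lem:oper-chain-homotopy}, the composition of integral operators assembles into a chain equivalence $\{\pi_p : C_p(K^0) \to C_p(K^{\scst H})\}$ with chain inverse $\{\iota_p\}$; in particular $\pi_1\iota_1 = \mathrm{id}$. Since $\{[\alpha_k]\}_{k=1}^{n}$ is a basis of $H_1(K^{\scst H})$ (the hole self-loops modulo the boundary of the unique foreground face), the cycles $\{\iota_1(\alpha_k)\}_{k=1}^{n}$ form a basis of representative $1$-cycles of the homology generators of $K^0$, dual to the basis $\{[\alpha_k^*\pi_1]\}_{k=1}^{n}$ of $H^1(K^0)$. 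Hence any $1$-cocycle $\gamma$ of $K^0$ is cohomologous to $\sum_k \gamma(\iota_1(\alpha_k))\,\alpha_k^*\pi_1$, and I only need to match coefficients. For $\tilde c_i := \alpha_i^*\pi_1$, the identity $\pi_1\iota_1=\mathrm{id}$ gives $\tilde c_i(\iota_1(\alpha_k)) = \alpha_i^*(\alpha_k) = \delta_{ik}$. For $c$, the intermediate edges $e_1,\dots,e_n$ lie strictly between two foreground faces, hence on no hole boundary; $e_a \in \iota_1(\alpha_i)$ contributes exactly once on the $i$-th cycle; and $e_b \in \iota_1(\beta)$ lies on the outer-background boundary, disjoint from every $\iota_1(\alpha_k)$. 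Therefore $c(\iota_1(\alpha_i))=1$ and $c(\iota_1(\alpha_k))=0$ for $k\neq i$, yielding $[c] = [\tilde c_i]$.

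The delicate step will be (ii): although counting cocycle values against a basis of representative cycles of homology generators is a standard $\mathbb{Z}/2$ duality, I will need to justify carefully that $\{\iota_1(\alpha_k)\}$ really is such a basis in $K^0$ and that its Kronecker-dual basis on the cohomology side is exactly $\{[\alpha_k^*\pi_1]\}$. Both assertions follow from the chain equivalence provided by the integral operators, but the bookkeeping between the abstract duality and the concrete identifications across levels of the pyramid is where the real content of the proof lies.
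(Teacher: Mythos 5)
Your proposal is correct, and its overall decomposition (verify $\delta^1 c=0$ face by face, then identify the cohomology class, then deduce the blocking property) matches the paper's; the face-by-face cocycle verification is exactly the paper's first step. Where you diverge is in step (ii): the paper does not pass through a basis/dual-basis expansion of $H^1(K^0)$ at all. It pulls $c$ back to the homology-generator level and checks the on-the-nose cochain identity $c\iota_1=\alpha_i^*$ by evaluating $c\iota_1$ on \emph{every} $1$-cell of $K^{\scst H}$ --- the hole loops $\alpha_1,\dots,\alpha_n$ \emph{and} the outer loop $\beta$ (this is where $e_b\in\iota_1(\beta)$ earns its keep: $c\iota_1(\beta)=1=\alpha_i^*(\beta)$). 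Since $\iota_1\pi_1$ is chain homotopic to the identity, $[c]=[c\iota_1\pi_1]=[\alpha_i^*\pi_1]$ follows at once. Your route instead evaluates $c$ only on the $\iota_1(\alpha_k)$ and invokes $\mathbb{Z}/2$ Kronecker duality, which is why you must additionally justify that $\{[\iota_1(\alpha_k)]\}$ is a homology basis of $K^0$ with $\{[\alpha_k^*\pi_1]\}$ as its dual cohomology basis --- precisely the ``delicate step'' you flag. That step is sound (it follows from the chain equivalence of Lemmas~\ref{lem:cocycle-integral-operator} and~\ref{lem:oper-chain-homotopy} together with the structure of $K^{\scst H}$, where $Z_1=C_1$ and $B_1=\{0,\alpha_1+\cdots+\alpha_n+\beta\}$), but the paper's direct computation of $c\iota_1$ on all $n+1$ self-loops reaches the same conclusion with strictly less machinery, at the price of one extra evaluation on $\beta$. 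Both arguments ultimately rest on the same facts ($\pi_1\iota_1=\mathrm{id}$ and $\iota_1\pi_1\simeq\mathrm{id}$), so yours is a valid variant rather than a fundamentally different proof.
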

\begin{proof}
  $c$ is a $1$-cocycle iff $c\partial_2$ is the null homomorphism.
First, $c\partial_2(f_{\ell})=c(e_{\ell}+e_{\ell+1})=1+1=0$. Second, if $f$ is a $2$-cell of $K^0$, $f\neq f_{\ell}$, $\ell=0,\dots, n$, then, $c\partial(f)=0$. 
To prove that the cocycles $c$ and $\alpha_i^*\pi_1$ (the down projection of $\alpha_i^*$ to the base level of the pyramid) are cohomologous, is equivalent to prove that $c\iota_1=\alpha_i^*$. We have that  $c\iota_1(\alpha)=c(e_b)=1$ and $c\iota_1(\beta)=c(e_a)=1$. Finally, $c\iota_1$ over the remaining self-loops of the boundary graph of the homology-generator level is null. Therefore, $c\iota_1=\alpha_i^*$. 
\qed 
\end{proof}
}

\begin{figure}[t!]
\begin{center}
\includegraphics[width=14cm]{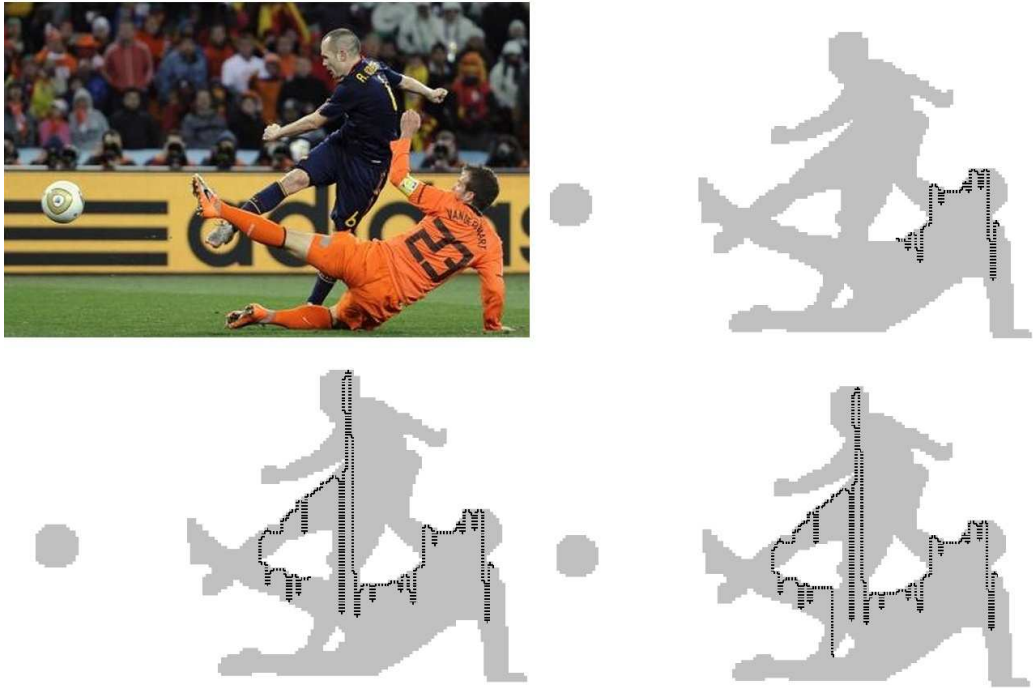}
\end{center}
\caption{On the top-left, original image (a image from the 2010 World Cup final in South Africa); in bold, down-projected cocycles.}
\label{fig:iniesta}
\end{figure}

%%%%%%%%%%%%%%%%%%%%%%%%%%%%%%%%%%%%%%%%%%%%%%%%%%%%%%%%%%%%%%%%%%%%%%%%%%%%%%%%%%%%%%%%%%%%%%%%%%%%%%%%%%%%%5

\section{Conclusion}
\label{sec:conc}

This paper considers cohomology in the context of graph pyramids. Representative cocycles are computed at the reduced top level and down projected to the base level corresponding to the original image. Connections between cohomology and graph theory are proposed, considering the application of cohomology in the context of classification and recognition. The current paper extends the previous work with detailed insights and proofs, and a refinement of the previous method that makes the obtained cocycles scanning and rotation invariant. Extension to higher dimensions, where cohomology has a richer algebraic structure than homology, and complete cohomology - graph theory associations are proposed for future work. \cR{For this last task, we could consider nD generalized map pyramids \cite{Damiand06}.}

%%%%%%%%%%%%%%%%%%%%%%%%%%%%%%%%%%%%%%%%%%%%%%%%%%%%%%%%%%%%%%%%%%%%%%%%%%%%%%%%%%%%%%%%%%%%%%55

\section{Acknowledgements}
This work was partially supported by the Austrian Science Fund under grants S9103-N13, P18716-N13 and P20134-N13.

%%%%%%%%%%%%%%%%%%%%%%%%%%%%%%%%%%%%%%%%%%%%%%%%%%%%%%%%%%%%%%

\section*{References}

\bibliographystyle{elsarticle-num}

\bibliography{gbr2cviu}

%%%%%%%%%%%%%%%%%%%%%%%%%%%%%%%%%%%%%%%%%%%%%%%%%%%%%%%%%%%%%%
\end{document}